\DeclareMathOperator*{\argmax}{\arg\!\max}
 \theoremstyle{definition}
\newtheorem{definition}{Definition}
 \theoremstyle{property}
\theoremstyle{theorem}
\newtheorem{theorem}{Theorem}
\theoremstyle{assumption}
\newtheorem{assumption}{Assumption}
\theoremstyle{lemma}
\theoremstyle{remark}
 \theoremstyle{proposition}
\newtheorem{proposition}{Proposition}
\newcommand*{\rom}[1]{\expandafter\@slowromancap\romannumeral #1@}
\begin{document}
%
\title{Real-Time Deployment of a Large-Scale Multi-Quadcopter System (MQS)}
%
%
%

\author{Hossein Rastgoftar

\thanks{H. Rastgoftar is with the Department of Aerospace and Mechanical Engineering at the  University of Arizona, Tucson, AZ 85721, USA {Email: hrastgoftar@arizona.edu}}

}

%
%

\markboth{}%
{Shell \MakeLowercase{\textit{et al.}}: Bare Demo of IEEEtran.cls for IEEE Journals}
%



\maketitle

\begin{abstract}
This paper presents a continuum mechanics-based approach for real-time deployment (RTD) of a multi-quadcopter system between moving initial and final configurations arbitrarily distributed in a $3$-D motion space. The proposed RTD problem is decomposed into  spatial planning, temporal planning and acquisition sub-problems. For the spatial planning, the RTD desired coordination is defined by integrating (i) rigid-body rotation, (ii) one-dimensional homogeneous deformation, and (ii) one-dimensional heterogeneous coordination such that necessary conditions for inter-agent collision avoidance between every two quadcopter UAVs are satisfied.  By the RTD temporal planning, this paper suffices the inter-agent collision avoidance between every two individual quadcopters, and assures the boundedness of the rotor angular speeds for every individual quadcopter. For the RTD acquisition, each quadcopter modeled by a nonlinear dynamics applies a nonlinear control to stably and safely track the desired RTD trajectory such that the angular speeds of each quadcopter remain bounded and do not exceed a certain upper limit.
\end{abstract}

\begin{IEEEkeywords}
Real-Time Deployment, Nonlinear Control Design, Multi-Quadcopter System, and Multi-Agent Coordination.
\end{IEEEkeywords}

%
\IEEEpeerreviewmaketitle

\section{Introduction}
Over the past few decades, multi-agent coordination problems have been extensively studied and found numerous applications in surveillance \cite{thales2020single}, {\color{black}search and rescue} \cite{cooper2020optimal}, agricultural scouting \cite{skobelev2018designing}, structural health monitoring \cite{zhang2020participant}, and air traffic management \cite{idris2018air}.  Early work on multi-agent coordination commonly treats a group of vehicles (agents) as particles of a single rigid or deformable body acquiring the desired coordination in a centralized fashion  through leader-less \cite{ren2009distributed, li2020consensus, sun2019bipartite} or leader-follower communication-based \cite{notarstefano2011containment, li2021containment, wang2019necessary, xu2014necessary} approaches. More recently, researchers have studied real-time deployment (RTD) of multi-agent systems which is also called \textit{optimal mass transport} (OMT) in the literature. In the OMT problem,  agent coordination is governed by the continuity PDE and assigned by finding the optimal transformation between two arbitrary distributions with an equal mass \cite{villani2021topics, de2018optimal}. The existing OMT work assures convergence of agent deployment from an initial distribution to a target configuration. However, inter-agent collision avoidance may not be necessarily avoided when each individual agent represents an actual vehicle with finite size and nonlinear dynamics. This paper  develops a novel continuum-mechanics-based approach for collision-free real-time deployment of multi-vehicle system coordinating between two moving formations in a three-dimensional coordination space, where each vehicle represents a quadcopter modeled by a nonlinear dynamics.

\subsection{Related Work}
Early work on OMT was inspired by Schrodinger bridge problem \cite{di2020optimal, chen2016relation, haasler2021multimarginal} which was presented as transformation of the state density function from a reference configuration to a target configuration. Refs. \cite{chen2016relation, di2020optimal, haasler2021multimarginal} study the relation between the Schrodinger bridge problem and OMT problems. 
Mass transport  of linear systems from an initial configuration to an arbitrary target configuration is presented as an energy minimization optimization problem in Ref. \cite{chen2016optimal}. Furthermore, optimal transport of discrete-time  linear systems are studied in Refs. \cite{haasler2021multimarginal, hudoba2021discrete} where \cite{hudoba2021discrete} uses  linear quadratic Gaussian (LQG) regulation to formulate the OMT problem. This paper offers a continuum-mechanics-based solution to the OMT (RTD) problem which is inspired by the existing work on homogeneous transformation coordination of multi-agent systems presented in the author's previous work \cite{rastgoftar2021safe, rastgoftar2021scalable}. Because homogeneous transformation is an affine transformation, an $n$-D homogeneous transformation coordination can be defined as a decentralized leader-follower problem with $n+1$ that move independently and for and $n$-D simplex at any time $t$ and followers acquiring the desired coordination through local communication. 

\subsection{Contributions}
This paper applies the principles of kinematics of continuum mechanics to define RTD problem between arbitrary moving configurations  by combining (i) rigid-body rotation, (ii) $1$-D homogeneous transformation, and (iii) $2$-D heterogeneous coordination. This decomposition is advantageous since we can formally specify safety conditions, assure inter-agent collision avoidance, and impose the input constraints of individual  vehicles in a large-scale RTD problem. In this paper, we consider RTD of a multi-quadcopter system (MQS) and  define it  as spatial planning, temporal planning, and acquisition sub-problems. For the spatial planning, the RTD paths are determined between two moving configurations such that necessary conditions for inter-agent collision avoidance are provided. The RTD temporal planning determines the reference trajectories of individual quadcopters verifying all safety requirements. For the RTD acquisition, a low-level feedback linearization control is designed for each quadcopter such that the desired RTD trajectories are stably tracked and rotor angular speeds of all quadcopters remain bounded.

\subsection{Outline}
This paper is organized as follows: Preliminary notions are presented in Section \ref{Preliminaries} and followed by Problem Statement in Section \ref{Problem Statement}. The RTD is decomposed into planning and acquisition problems presented in Sections \ref{RTD Planning} and \ref{RTD Tracking}, respectively. The simulation results are presented in Section \ref{Simulation Results} and followed by Conclusion in Section \ref{Conclusion}.
\section{Preliminaries}\label{Preliminaries}
\subsection{Rigid-Body Rotation}


To realize position,  we define a global (an inertial) coordinate system, with base vectors $\hat{\mathbf{e}}_1$, $\hat{\mathbf{e}}_2$, and $\hat{\mathbf{e}}_3$, and  a local coordinate system with base vectors $\hat{\mathbf{c}}_1$, $\hat{\mathbf{c}}_2$, and $\hat{\mathbf{c}}_3$ that can rotate with respect to the inertial coordinate system. To characterize the rotation of the local coordinate system with respect to the global coordinate system, we first use the $3-2-1$ Euler angles standard to characterize a rigid-body rotation by 
\vspace{-.1cm}
\begin{equation}\label{Leuler}
    \resizebox{0.99\hsize}{!}{%
$
\mathbf{L}_{\mathrm{Euler}}\left(x_1,x_2,x_3\right)= \begin{bmatrix}
    C_{x_2} C_{x_3}&C_{x_2} S_{x_3} &-S_{x_2}\\
  S_{x_1}S_{x_2} C_{x_3}-C_{x_1}S_{x_3}&S_{x_1}S_{x_2} S_{x_3}+C_{x_1}C_{x_3}&S_{x_1}C_{x_2}\\
  C_{x_1}S_{x_2} C_{x_3}+S_{x_1}S_{x_3} &C_{x_1}S_{x_2} S_{x_3}-S_{x_1}C_{x_3}&C_{x_1}C_{x_2}
\end{bmatrix}
,
$
}
\end{equation}
where $x_1$, $x_2$, and $x_3$ are the first, second, and third Euler angles, respectively. Then, $\left(\hat{\mathbf{c}}_{1},\hat{\mathbf{c}}_{2},\hat{\mathbf{c}}_{3}\right)$ are related to $\left(\hat{\mathbf{e}}_{1},\hat{\mathbf{e}}_{2},\hat{\mathbf{e}}_{3}\right)$ by
\begin{equation}\label{rotationoflocal}
    \hat{\mathbf{c}}_{h}(\gamma(t),\mu(t))=\mathbf{R}_D(t)\hat{\mathbf{e}}_h,\qquad ~h=1,2,3,
\end{equation}
where rotation matrix $\mathbf{R}_D^T(t)=\mathbf{L}_{\mathrm{Euler}}\left(0,\gamma(t),\mu(t)\right)$ is given by
\begin{equation}\label{Rotation}
    \mathbf{R}_D(t)=
    \begin{bmatrix}
    \cos \gamma\cos \mu& \cos \gamma\sin \mu&-\sin\gamma\\
    -\sin\mu&\cos\mu&0\\
    \sin\gamma\cos\mu&\sin\gamma\sin\mu&\cos\gamma\\
    \end{bmatrix}
    .
\end{equation}
Note that the rotation matrix $\mathbf{R}_D(t)$ is defined based on the second and third Euler angles, where the first Euler angle is zero at any time $t$.

\subsection{Position Notation}
In this paper, we consider real-time deployment of an MQS consisting of $N$ quadcopters, where quadcopters' identification numbers are defined by $\mathcal{V}=\left\{1,\cdots,N\right\}$.
For every quadcopter $i\in \mathcal{V}$, we define the \textbf{global desired position} denoted by $\mathbf{p}_i=x_{i,g}\hat{\mathbf{e}}_1+y_{i,g}\hat{\mathbf{e}}_2+z_{i,g}\hat{\mathbf{e}}_3$ and the \textbf{local desired position} denoted by $\mathbf{a}_i=u_{i}\hat{\mathbf{c}}_1+v_{i}\hat{\mathbf{e}}_2+w_{i}\hat{\mathbf{e}}_3$, where $\mathbf{p}_i(t)$ and $\mathbf{a}_i(t)$ are related by
\begin{equation}
    \mathbf{p}_i(t)=\mathbf{d}(t)+\mathbf{a}_i(t),\qquad \forall i\in \mathcal{V},
\end{equation}
at time $t$, where $\mathbf{d}(t)=d_x(t)\hat{\mathbf{e}}_1+d_y(t)\hat{\mathbf{e}}_2+d_z(t)\hat{\mathbf{e}}_3$ is the rigid-body displacement vector assigning position of the origin of the local coordinate system. The \textbf{actual position} of quadcopter $i\in \mathcal{V}$ is denoted by $\mathbf{r}_i(t)$ and expressed with respect to the inertial coordinate system by $\mathbf{r}_i(t)=x_i(t)\hat{\mathbf{e}}_1+y_{i}(t)\hat{\mathbf{e}}_2+z_{i}(t)\hat{\mathbf{e}}_3$. This paper considers real-time deployment over the finite time interval $[t_s,t_f]$ where the \textbf{initial and final local positions} of every quadcopter $i\in \mathcal{V}$, denoted by $\mathbf{a}_{i,s}=\mathbf{a}_{i}(t_s)$ and $\mathbf{a}_{i,f}=\mathbf{a}_{i}(t_f)$, are known.

\begin{assumption}
We define unit vectors $\hat{\mathbf{e}}_1$, $\hat{\mathbf{e}}_2$, and $\hat{\mathbf{e}}_3$ as $\hat{\mathbf{e}}_1=\begin{bmatrix}1&0&0\end{bmatrix}^T$, $\hat{\mathbf{e}}_2=\begin{bmatrix}0&1&0\end{bmatrix}^T$, and $\hat{\mathbf{e}}_3=\begin{bmatrix}0&0&1\end{bmatrix}^T$. Therefore, rigid-body displacement, actual position, and global desired positions can be expressed in vector forms by $\mathbf{d}=\begin{bmatrix}d_x&d_y&d_z\end{bmatrix}^T$, $\mathbf{r}_i=\begin{bmatrix}x_{i}&y_{i}&z_{i}\end{bmatrix}^T$, and $\mathbf{p}_i=\begin{bmatrix}x_{i,g}&y_{i,g}&z_{i,g}\end{bmatrix}^T$, respectively. 
\end{assumption}

\section{Problem Statement}\label{Problem Statement}
We consider an  MQS consisting of $N$ quadcopters where dynamics of quadcopter $i\in \mathcal{V}$ is given by 
\begin{equation}\label{NonlinearDynamics}
    \begin{cases}
    \dot{\mathbf{x}}_i=\mathbf{f}\left(\mathbf{x}_i\right)+\mathbf{g}\left(\mathbf{x}_i\right)\mathbf{u}_i\\
    \mathbf{y}_i=\mathbf{C}\mathbf{x}_i
    \end{cases}
    ,\qquad \forall i\in \mathcal{V}.
\end{equation}
In \eqref{NonlinearDynamics}, $\mathbf{x}_i\in \mathbb{R}^{14}$ is the state vector; actual position $\mathbf{y}_i\in \mathbb{R}^4$ is the output vector; $\mathbf{u}_i\in \mathbb{R}^{4}$ is the input vector; $\mathbf{C}\in \mathbb{R}^{4\times 14}$ is constant; $\mathbf{f}:\mathbb{R}^{14}\rightarrow \mathbb{R}^{14} $ and $\mathbf{g}:\mathbb{R}^{14}\rightarrow \mathbb{R}^{14\times 4} $ are smooth functions;  $\mathbf{x}_i$, $\mathbf{u}_i$, $\mathbf{f}$, $\mathbf{g}$, and $\mathbf{C}$ are specified in Section \ref{RTD Tracking}. For every quadcopter $i\in \mathcal{V}$, we define the following properties and characteristics:
\begin{enumerate}
\item{\textbf{Agent size $\epsilon$:} Every quadcopter $i\in \mathcal{V}$ can be enclosed by a ball of radius $\epsilon$.}
\item{\textbf{Deviation upper bound $\delta$:} This paper assumes that each quadcopter can execute a proper trajectory tracking control such that the norm of tracking error is less than  $\delta $ for every quadcopter $i\in \mathcal{V}$ at any time $t$.}
\item{\textbf{Quadcopter rotor speed:} Angular speed of rotor $j\in \left\{1,2,3,4\right\}$ of quadcopter $i\in \mathcal{V}$ is denoted by $\varpi_{ij}$. The rotor angular speeds cannot exceed the upper bound $\varpi_{\mathrm{max}}$ for every quadcopter $i\in \mathcal{V}$.}
\item{\textbf{Admissible control set $\mathcal{U}$:} Control input $\mathbf{u}_i$, executed by quadcopter $i\in \mathcal{V}$, must belong to compact set $\mathcal{U}$, i.e. $u_i\in \mathcal{U},~\forall i\in \mathcal{V}$.}
\end{enumerate}

We assume that the  initial and final configurations of the quadcopter team  are arbitrarily distributed in the motion space and  defined by sets
\begin{subequations}
\begin{equation}
\label{IC}
    \Omega_s=\left\{\mathbf{a}_{i,s}=u_{i,s}\hat{\mathbf{c}}_{1,s}+v_{i,s}\hat{\mathbf{c}}_{2,s}+w_{i,s}\hat{\mathbf{c}}_{3,s},~\forall i\in \mathcal{V}\right\},
\end{equation}
\begin{equation}\label{FC}
     \Omega_f=\left\{\mathbf{a}_{i,f}=u_{i,f}\hat{\mathbf{c}}_{1,f}+v_{i,f}\hat{\mathbf{c}}_{2,f}+w_{i,f}\hat{\mathbf{c}}_{3,f},~\forall i\in \mathcal{V}\right\},
\end{equation}
\end{subequations}
where 
\begin{subequations}
\begin{equation}\label{initialc}
    \mathbf{a}_{i,s}=\mathbf{a}_i(t_s),\qquad \forall i\in \mathcal{V},
\end{equation}
\begin{equation}\label{finalc}
    \mathbf{a}_{i,f}=\mathbf{a}_i(t_f),\qquad \forall i\in \mathcal{V},
\end{equation}
\begin{equation}
    \hat{\mathbf{c}}_{h,s}=\hat{\mathbf{c}}_{h}\left(\gamma_s,\mu_s\right),\qquad h=1,2,3,
\end{equation}
\begin{equation}
    \hat{\mathbf{c}}_{h,f}=\hat{\mathbf{c}}_{h}\left(\gamma_f,\mu_f\right),\qquad h=1,2,3.
\end{equation}
\end{subequations}
Note that $\Omega_s$ and $\Omega_f$ are expressed with respect to the local coordinate system at times $t_s$ and $t_f$, respectively. However, $\gamma_s=\gamma\left(t_s\right)$, $\mu_s=\mu\left(t_s\right)$, $\gamma_f=\gamma\left(t_f\right)$, and $\mu_f=\mu\left(t_f\right)$ are assigned based on initial and target positions of the MQS, expressed with respect to the inertial coordinate system, by solving the following optimization problem:
\begin{subequations}
\begin{equation}\label{INITIAL}
    \left(\gamma_{s},\mu_s\right)=\argmax\limits_{\gamma,\mu\in \left[0,\pi\right)}\left(\min\limits_{i,h\in \mathcal{V},~i\neq h}\left|\left(\mathbf{p}_{i,s}-\mathbf{p}_{h,s}\right)\cdot \hat{\mathbf{c}}_1\left(\gamma,\mu\right)\right|\right),
\end{equation}
\begin{equation}\label{FINAL}
    \left(\gamma_{f},\mu_f\right)=\argmax\limits_{\gamma,\mu\in \left[0,\pi\right)}\left(\min\limits_{i,h\in \mathcal{V},~i\neq h}\left|\left(\mathbf{p}_{i,f}-\mathbf{p}_{h,f}\right)\cdot \hat{\mathbf{c}}_1\left(\gamma,\mu\right)\right|\right),
\end{equation}
\end{subequations}
where ``$\cdot$'' is the dot product symbol. Eqs. \eqref{INITIAL} and \eqref{FINAL} assign $\left(\gamma_{s},\mu_s\right)$ and $ \left(\gamma_{f},\mu_f\right)$ such that the minimum separation distance along unit vector $\hat{\mathbf{c}}_1$ is maximized.

Given above problem setting, the main objective of this paper is to \textbf{define} desired deployment trajectory $\mathbf{p}_i(t)$ and \textbf{choose} $\mathbf{u}_i(t)$, for every quadcopter $i\in \mathcal{V}$, such that initial condition \eqref{initialc}, final condition \eqref{finalc},  and the following safety conditions are all satisfied:
\begin{subequations}
\begin{equation}\label{quadcopterpropeller}
    \bigwedge_{i\in \mathcal{V}}\bigwedge_{j=1}^4\left(\left|\varpi_{ij}(t)\right|\leq \varpi_{\mathrm{max}}\right),\qquad \forall t\in \left[t_s,t_f\right],
\end{equation}
\begin{equation}\label{collisionavoidance}
        \bigwedge_{i=1}^{N-1}\bigwedge_{j=i+1}^{N}\left(\|\mathbf{r}_i(t)-\mathbf{r}_j(t)\|\geq 2\epsilon\right),\qquad \forall t\in \left[t_s,t_f\right],
    \end{equation}
    \begin{equation}\label{boundedness}
    \bigwedge_{i\in \mathcal{V}}\left(\|\mathbf{r}_{i}(t)-\mathbf{p}_i(t)\|\leq \delta\right),\qquad \forall t\in \left[t_s,t_f\right].
\end{equation}
\end{subequations}
Condition \eqref{quadcopterpropeller} assures that the angular speed of no rotor exceeds $\varpi_{\mathrm{max}}$. Eq. \eqref{collisionavoidance} specifies the inter-agent avoidance collision between every two quadcopters. Stable tracking condition is formally specified by Eq. \eqref{boundedness}. 

To assign $\mathbf{p}_i(t)$, for every quadcopter $i\in \mathcal{V}$, deployment of  the MQS--from arbitrary initial  $ {\Omega}_s$ to target configuration ${\Omega}_f$--is defined by integrating three collective motion modes: (i) rigid-body rotation, (ii) homogeneous ccordination, and (iii) heterogeneous coordination. Assuming  every quadcopter can satisfy safety condition \eqref{boundedness}, Section \ref{RTD Planning} provides guarantee conditions for inter-agent collision avoidance in a large-scale RTD.  The RTD planning is complemented with the RTD acquisition in Section \ref{RTD Tracking} where we apply a feedback linearization approach to design control $\mathbf{u}_i$, for every quadcopter $i\in \mathcal{V}$, such that: (i) quadcopter $i\in \mathcal{V}$ stably tracks the desired trajectory $\mathbf{p}_i(t)$ and safety conditions \eqref{quadcopterpropeller} and \eqref{boundedness} are both satisfied.

\section{RTD Planning}\label{RTD Planning}
Given the initial and final condition \eqref{initialc} and \eqref{finalc}, the desired position of quadcopter $i\in \mathcal{V}$, denoted by
\begin{equation}\label{RTDmain}
\begin{split}
    \mathbf{a}_i(t)=&u_i(t)\hat{\mathbf{c}}_1\left(\gamma(t),\mu(t)\right)+v_i(t)\hat{\mathbf{c}}_2\left(\gamma(t),\mu(t)\right)\\
    +&w_i(t)\hat{\mathbf{c}}_3\left(\gamma(t),\mu(t)\right),
\end{split}
\end{equation}
is planned under the assumption that the the desired formation of the MQS translates rigidly with constant velocities at the initial time $t_s$ and final time $t_f$.  This assumption can be satisfied, if:
\begin{subequations}
\begin{equation}\label{constantvelocity}
    \dot{\mathbf{a}}_i\left(t_s\right)= \dot{\mathbf{a}}_{i}\left(t_f\right)=0,\qquad \forall i\in \mathcal{V},
\end{equation}
\begin{equation}\label{constantacceleration}
    \ddot{\mathbf{a}}_i\left(t_s\right)=\ddot{\mathbf{a}}_{i}\left(t_f\right)=0,\qquad \forall i\in \mathcal{V}.
\end{equation}
\end{subequations}
Therefore, the global desired velocities of the quadcopters satisfy the following initial and final conditions:
\begin{subequations}
\begin{equation}\label{constantvelocity}
    \dot{\mathbf{p}}_i\left(t\right)= \dot{\mathbf{d}}\left(t_s\right)=\mathrm{constant},\qquad \forall i\in \mathcal{V},~t\leq t_s,
\end{equation}
\begin{equation}\label{constantacceleration}
    \dot{\mathbf{p}}_i\left(t_f\right)= \dot{\mathbf{d}}\left(t_f\right)=\mathrm{constant},\qquad \forall i\in \mathcal{V},~t\geq t_f.
\end{equation}
\end{subequations}


The RTD problem is spatially planned by combining three collective motion modes: (i) rigid-body rotation, (ii) homogeneous motion along $\hat{\mathbf{c}}_1$, and (iii) heterogeneous motion in $\hat{\mathbf{c}}_2-\hat{\mathbf{c}}_3$ plane. 

In Sections \ref{fold1}, \ref{fold2}, and \ref{fold3}, we use the quintic polynomial function
\vspace{-.3cm}
\begin{equation}
    \sigma(t,t_s,t_f)=15\left({t-t_s\over t_f-t_s}\right)^5-16\left({t-t_s\over t_f-t_s}\right)^4+10\left({t-t_s\over t_f-t_s}\right)^3
    , 
\end{equation}
for $t\in \left[t_s,t_f\right]$, to define the  collective motion modes in an RTD problem. Note that 
$\sigma(t_s,t_s,t_f)=0$, $\sigma(t_f,t_s,t_f)=1$, $\dot{\sigma}(t_s,t_s,t_f)=\dot{\sigma}(t_f,t_s,t_f)=0$, and $\ddot{\sigma}(t_s,t_s,t_f)=\ddot{\sigma}(t_f,t_s,t_f)=0$. \textit{Note that $\sigma(t,t_s,t_f)$ is strictly increasing with respect to $t$.}


\subsection{Rigid-Body Rotation}\label{fold1}
The orientation of the local coordinate system are assigned by using \eqref{rotationoflocal}, where
rotation matrix $\mathbf{R}_D(t)$, defined based on $\gamma(t)$ and $\mu(t)$ at any time $t\in \left[t_s,t_f\right]$, is given in \eqref{Rotation}. Therefore, rigid-body rotation of the MQS is specified by angles $\gamma(t)$ and $\mu(t)$ at any time $t\in \left[t_s,t_f\right]$. Given $\left(\gamma_s,\mu_s\right)$ and $\left(\gamma_f,\mu_f\right)$, we define
\begin{subequations}
\begin{equation}
   \gamma(t)=\gamma_{s}\left(1-\sigma(t,t_s,t_f)\right)+\gamma_{f}\sigma(t,t_s,t_f),
\end{equation}
\begin{equation}
   \mu(t)=\mu_{s}\left(1-\sigma(t,t_s,t_f)\right)+\mu_{f}\sigma(t,t_s,t_f),
\end{equation}
\end{subequations}
for $t\in \left[t_s,t_f\right]$.
\begin{figure*}
 \centering
 \subfigure[]{\includegraphics[width=0.32\linewidth]{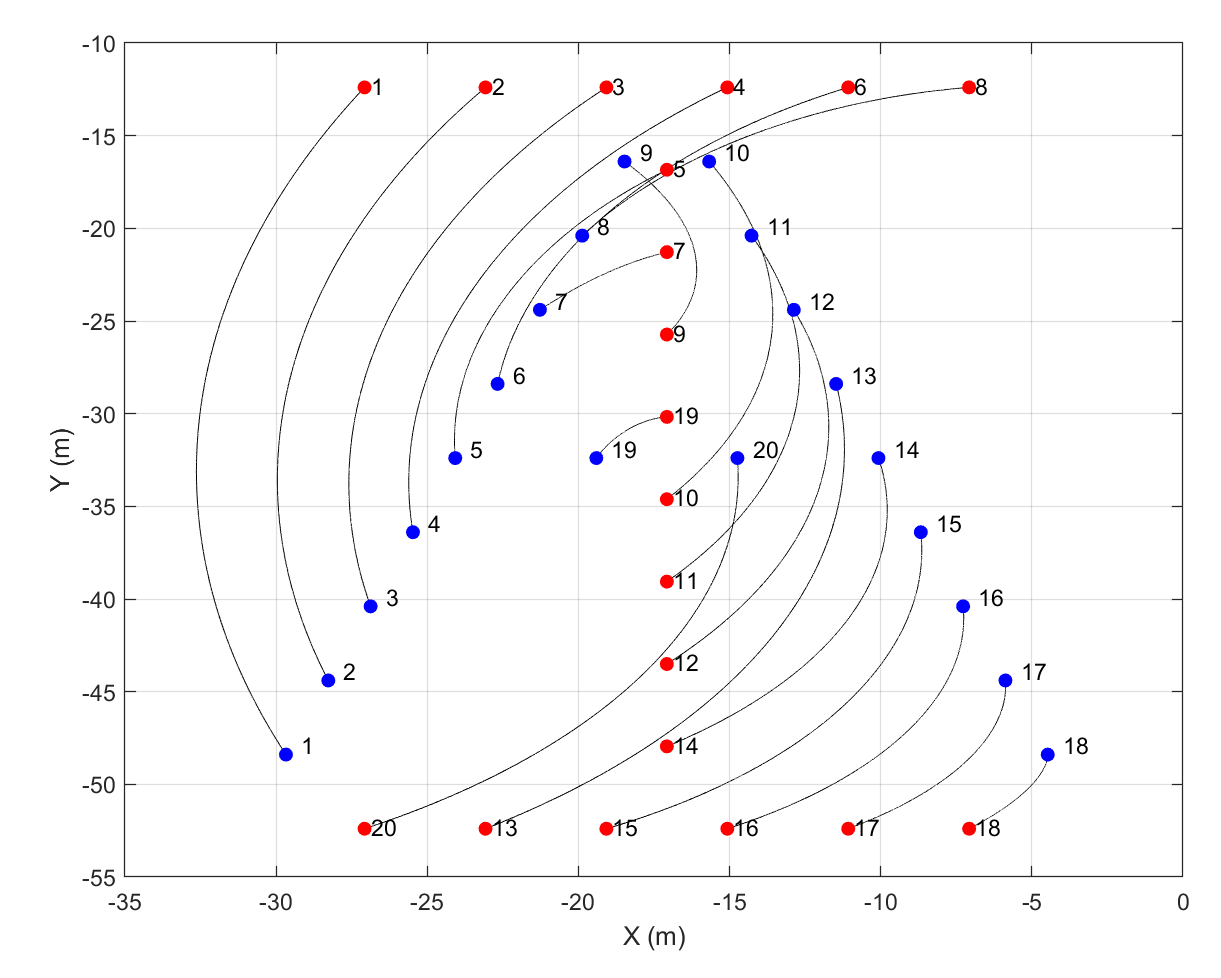}}
 \subfigure[]{\includegraphics[width=0.32\linewidth]{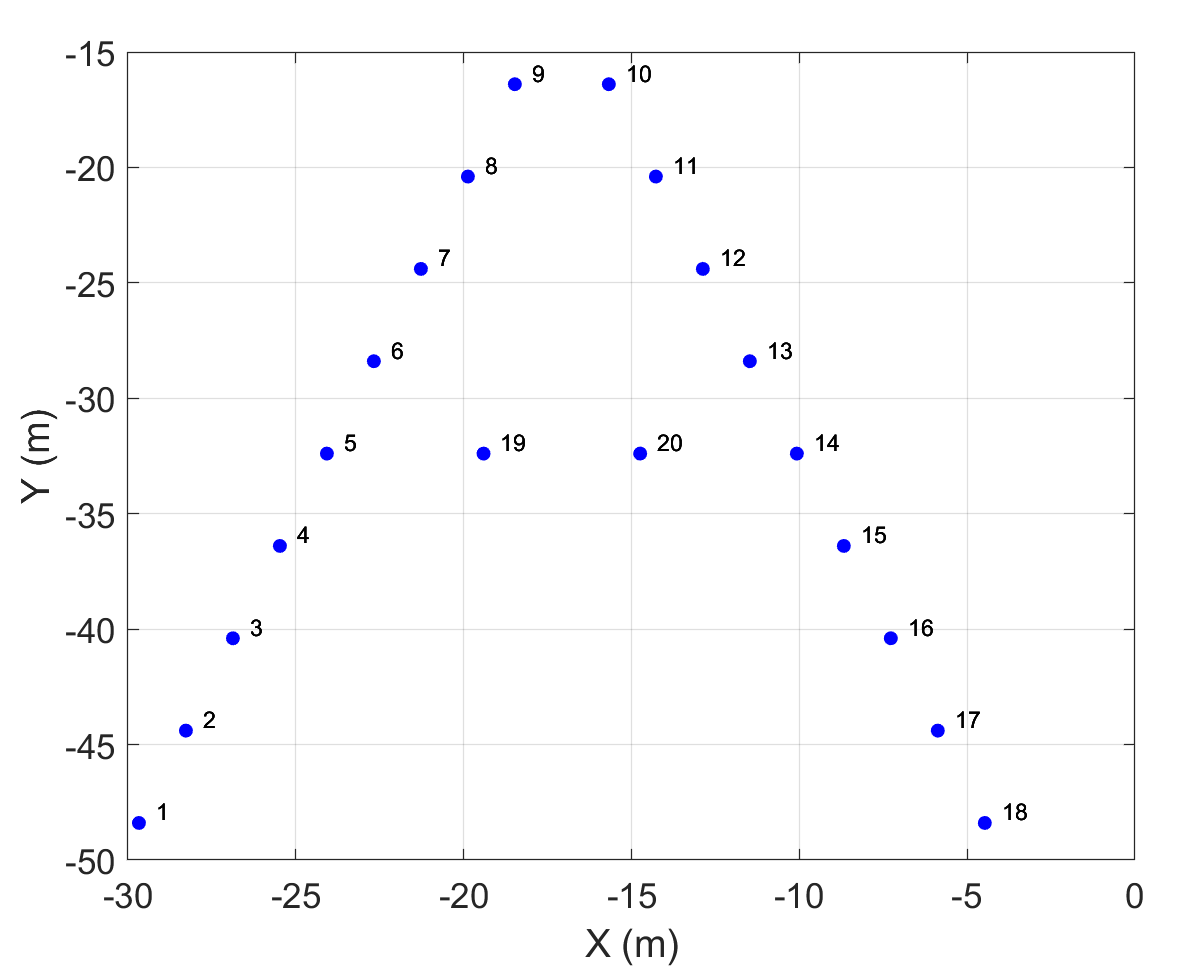}}
 \subfigure[]{\includegraphics[width=0.32\linewidth]{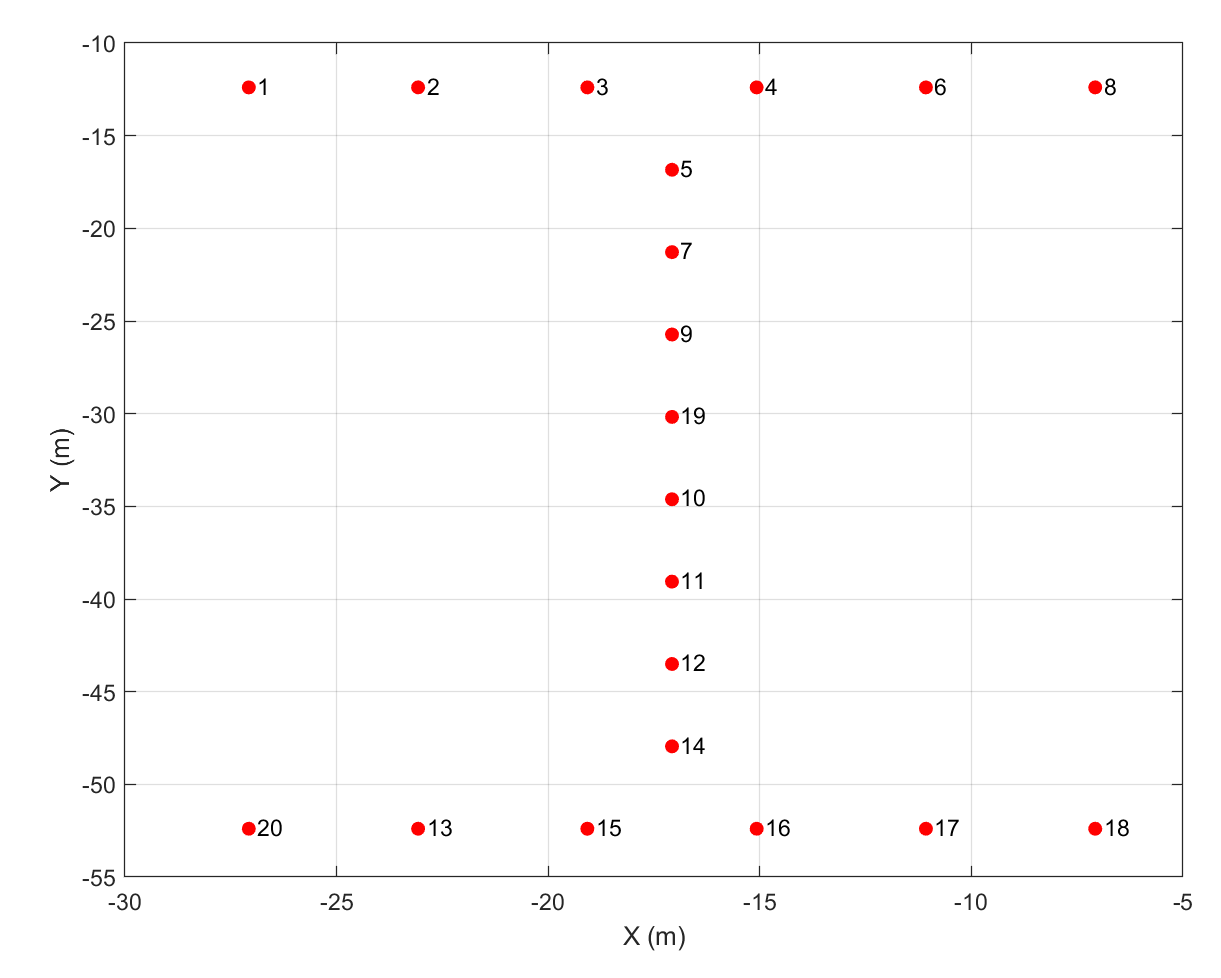}}
\vspace{-0.3cm}
     \caption{(a)  RTD paths from ``A" to ``I'' by 20 quadcopters distributed in the $x-y$ plane, i.e $\gamma(t)=0$ at any time $t\in \left[t_s,t_f\right]$. (b) Initial formation (Letter ``A''): $\mu_{s}=\mu\left(t_s\right)=172.8\deg$, $b_1=18$, $b_2=17$, $b_3=16$, $b_4=15$, $b_5=14$, $b_6=13$, $b_7=12$, $b_8=20$, $b_9=11$, $b_{10}=10$, $b_{11}=19$, $b_{12}= 9$, $ b_{13}=8$, $b_{14}=7$, $b_{15}= 6$, $b_{16}=5$, $b_{17}= 4$, $b_{18}= 3$, $b_{19}=2$, 
     $b_{20}=1$.  (c) Final formation (Letter ``I''): $\mu_{f}=\mu\left(t_f\right)=28.8\deg$.}
\label{DeploymentExample}
\end{figure*}

\subsection{Homogeneous Transformation Coordination along $\hat{\mathbf{c}}_1$}\label{fold2}
Given angles  $\left(\gamma_s,\mu_s\right)$ and  $\left(\gamma_f,\mu_f\right)$, assigned by solving \eqref{INITIAL} and \eqref{FINAL},  the initial and final configurations of the UAVs, denoted by $\Omega_s$ and $\Omega_f$,  are given by \eqref{IC} and \eqref{FC}, respectively. UAVs can be sorted based on their $u_{i,s}$ coordinates, along the unit vector $\hat{\mathbf{c}}_{1,s}$, and set $\mathcal{V}$ be expressed by
\begin{equation}
    \mathcal{V}=\left\{b_1,\cdots,b_{N}:u_{b_k,s}< u_{b_{k+1},s},~ k=1,\cdots,N-1 \right\}.
\end{equation}
where $b_k\in \mathcal{V}$ represents a quadcopter whose order number is $k$ in the initial formation ${\Omega}_s$. 
To assure inter-agent collision avoidance, we require that the order numbers of the quadcopters do not change when they are transforming from ${\Omega}_s$ to ${\Omega}_f$. Therefore, the RTD planning satisfies the following requirement:
\begin{equation}\label{mainhomogeneousss}
    \bigwedge_{k=1}^{N-1}\left(u_{b_k}(t)<u_{b_{k+1}}(t)\right),\qquad \forall t\in \left[t_s,t_f\right],~\mathcal{V}=\left\{b_1,\cdots,b_N\right\}.
\end{equation}
Per this requirement, the order numbers of the quadcopters in final configuration are the same as the order numbers of quadcopters in the initial configuration. Therefore, set $\mathcal{V}$ can be also defined as follows:
\begin{equation}
    \mathcal{V}=\left\{b_1,\cdots,b_{N}:u_{b_k,f}< u_{b_{k+1},f},~ k=1,\cdots,N-1 \right\},
\end{equation}
where $u_{b_k,f}=u_{b_k}(t_f)$.

\begin{definition}
Set $\mathcal{V}$ can be expressed as $\mathcal{V}=\mathcal{L}\bigcup\mathcal{F}$ where disjoint subsets \begin{subequations}
\begin{equation}
    \mathcal{L}=\left\{b_1,b_N\right\},
\end{equation}
\begin{equation}
    \mathcal{F}=\left\{b_2,\cdots,b_{N-1}\right\},
\end{equation}
\end{subequations}
define the \textit{leader quadcopters} and \textit{follower quadcopters}, respectively.
\end{definition}

For better clarification, consider the RTD example shown in Fig. \ref{DeploymentExample} that illustrates safe coordination of $20$ quadcopters from ``A" (initial formation) to ``I" (final formation) in the $x-y$ plane. Because the MQS is restricted to move in the $x-y$ plane, $\gamma(t)=0$ at any time $t\in  \left[t_s,t_f\right]$. Given initial formation of the MQS, we obtain $\mu_s=172.8\deg$, $\mathcal{L}=\left\{18,1\right\}$ ($b_1=18$, $b_{20}=1$) and $\mathcal{F}=\left\{1,\cdots,20\right\}\setminus \mathcal{L}$. Given the MQS final formation (``I"), $\mu_f=28.8\deg$ is obtained.
\begin{definition}
Given initial and final configurations of the MQS, we define initial reference weight $\beta_{i,s}$ and final reference weight $\beta_{i,f}$ for every quadcopter $i\in \mathcal{V}$  by
\begin{subequations}
\begin{equation}
    \beta_{i,s}={u_{b_{N},s}-u_{i,s}\over u_{b_{N},s}-u_{b_{1},s}}\in \left[0,1\right]
    ,
\end{equation}
\begin{equation}
    \beta_{i,f}={u_{b_{N},f}-u_{i,s}\over u_{b_{N},f}-u_{b_{1},f}}\in \left[0,1\right]
    .
\end{equation}
\end{subequations}
\end{definition}
Per definitions of set $\mathcal{F}$,  
$u_{b_1,s}< u_{i,s}<u_{b_N,s}$ and $u_{b_1,f}< u_{i,f}<u_{b_N,f}$. Therefore, $\beta_{i,s}>0$ and $\beta_{i,f}>0$ for every quadcopter $i\in \mathcal{F}$. Theorem \ref{thm1} provides guarantee conditions for satisfaction of  \eqref{mainhomogeneousss} through specifying initial and final MQS arrangements.
\begin{theorem}\label{thm1}
Define
\begin{subequations}
\begin{equation}\label{dmin}
    d_{\mathrm{min}}=\min\left\{u_{b_N,s}-u_{b_1,s},u_{b_N,f}-u_{b_1,s}\right\}
\end{equation}
\begin{equation}\label{betastar}
    \beta^*=\min\limits_{i,j\in \mathcal{V},~i\neq j}\min\left\{\left|\beta_{ij,s}\right|,\left|\beta_{ij,f}\right|\right\},
\end{equation}
\end{subequations}
where 
\begin{subequations}
\begin{equation}
    \beta_{ij,s}=\beta_{i,s}-\beta_{j,s},
\end{equation}
\begin{equation}
    \beta_{ij,f}=\beta_{i,f}-\beta_{j,f}.
\end{equation}
\end{subequations}
 Assume every quadcopter $i\in \mathcal{V}$ can execute a proper control input $\mathbf{u}_i$ such that safety condition \eqref{boundedness} is satisfied when every quadcopter is enclosed by a ball of radius $\epsilon$.  Then, inter-agent collision avoidance between every two quadcopters are avoided, if
 \begin{equation}\label{mainconditionthm1}
    d_{\mathrm{min}}\beta^*\geq 2\left(\delta+\epsilon\right),
\end{equation}
 and the $u_i$ component of  local desired position of every quadcopter $i\in \mathcal{V}$ is defined by
 \begin{equation}\label{ui}
     u_i(t)=
     \begin{cases}
     \left(1-\sigma\left(t,t_s,t_f\right)\right)u_{i,s}+\sigma\left(t,t_s,t_f\right)u_{i,f}&i\in \mathcal{L}\\
     \left(1-\beta_i\left(t\right)\right)u_{b_1}\left(t\right)+\beta_i\left(t\right)u_{b_N}\left(t\right)&i\in \mathcal{F}\\
     \end{cases}
 \end{equation}
at any time $t\in \left[t_s,t_f\right]$, where 
\begin{equation}\label{betai}
    \beta_i(t)=\left(1-\sigma\left(t,t_s,t_f\right)\right)\beta_{i,s}+\sigma\left(t,t_s,t_f\right)\beta_{i,f},\qquad \forall i\in \mathcal{V}.
\end{equation} 
\end{theorem}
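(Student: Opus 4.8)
The plan is to show that the ordering condition \eqref{mainhomogeneousss} holds for all $t\in[t_s,t_f]$, and then to upgrade this ordering gap into the Euclidean separation bound \eqref{collisionavoidance} using the tracking tolerance $\delta$ and the agent radius $\epsilon$. First I would observe that for two leaders $b_1,b_N\in\mathcal{L}$, the prescribed $u$-components in \eqref{ui} are convex combinations of $u_{b_1,s}<u_{b_N,s}$ and $u_{b_1,f}<u_{b_N,f}$ with the \emph{same} weight $\sigma(t,t_s,t_f)\in[0,1]$, so $u_{b_N}(t)-u_{b_1}(t)=(1-\sigma)(u_{b_N,s}-u_{b_1,s})+\sigma(u_{b_N,f}-u_{b_1,f})\geq d_{\mathrm{min}}>0$ at every $t$, using the definition \eqref{dmin}. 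This is the key positive ``width'' of the formation along $\hat{\mathbf c}_1$ that everything else rests on.

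Next, for any two agents $i,j\in\mathcal{V}$ I would compute $u_i(t)-u_j(t)$ from \eqref{ui}. For two followers this difference equals $(\beta_i(t)-\beta_j(t))\,(u_{b_N}(t)-u_{b_1}(t))$ because the $u_{b_1}(t)$ terms cancel; for a leader–follower pair (and, with the natural reading $\beta_{b_1}\equiv0$, $\beta_{b_N}\equiv1$, also for a leader–leader pair) the same formula holds. By \eqref{betai}, $\beta_i(t)-\beta_j(t)=(1-\sigma)\beta_{ij,s}+\sigma\,\beta_{ij,f}$, which is a convex combination of $\beta_{ij,s}$ and $\beta_{ij,f}$; since these two numbers have the same sign (the ordering is consistent by \eqref{mainhomogeneousss} at the endpoints) its absolute value is at least $\min\{|\beta_{ij,s}|,|\beta_{ij,f}|\}\geq\beta^*$ from \eqref{betastar}. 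Combining, $|u_i(t)-u_j(t)|\geq\beta^*\,d_{\mathrm{min}}$ for every $i\neq j$ and every $t$, which in particular establishes \eqref{mainhomogeneousss}.

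Finally I would pass from the desired positions to the actual positions. Since $u_i(t)=\mathbf a_i(t)\cdot\hat{\mathbf c}_1$ and $\hat{\mathbf c}_1$ is a unit vector, the projections of the desired global positions $\mathbf p_i(t),\mathbf p_j(t)$ onto $\hat{\mathbf c}_1$ differ by at least $\beta^* d_{\mathrm{min}}$, so $\|\mathbf p_i(t)-\mathbf p_j(t)\|\geq\beta^* d_{\mathrm{min}}$. Invoking the standing hypothesis that \eqref{boundedness} holds, i.e. $\|\mathbf r_i(t)-\mathbf p_i(t)\|\le\delta$ for all $i$, the triangle inequality gives
\[
\|\mathbf r_i(t)-\mathbf r_j(t)\|\ \ge\ \|\mathbf p_i(t)-\mathbf p_j(t)\|-\|\mathbf r_i(t)-\mathbf p_i(t)\|-\|\mathbf r_j(t)-\mathbf p_j(t)\|\ \ge\ \beta^* d_{\mathrm{min}}-2\delta,
\]
and condition \eqref{mainconditionthm1}, $d_{\mathrm{min}}\beta^*\ge2(\delta+\epsilon)$, yields $\|\mathbf r_i(t)-\mathbf r_j(t)\|\ge2\epsilon$, which is exactly \eqref{collisionavoidance}.

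I expect the main obstacle to be bookkeeping rather than a deep idea: carefully justifying that $\beta_{ij,s}$ and $\beta_{ij,f}$ always carry the same sign (so that the convex combination does not vanish in the interior), handling the leader indices $b_1,b_N$ uniformly with the follower formula, and confirming that the rigid rotation and rigid displacement $\mathbf d(t)$ — which act isometrically — do not disturb the $\hat{\mathbf c}_1$-projection argument. One should also note that only the $\hat{\mathbf c}_1$ separation is used, so the heterogeneous motion in the $\hat{\mathbf c}_2$–$\hat{\mathbf c}_3$ plane is irrelevant to this particular bound; a remark to that effect would clarify why the two-dimensional component of the coordination plays no role here.
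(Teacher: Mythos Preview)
Your proposal is correct and follows essentially the same route as the paper: write $u_i(t)-u_j(t)=(\beta_i(t)-\beta_j(t))(u_{b_N}(t)-u_{b_1}(t))$, bound each factor below by $\beta^*$ and $d_{\mathrm{min}}$ respectively using that both are convex combinations of their endpoint values, and then invoke \eqref{mainconditionthm1}. If anything, your write-up is more careful than the paper's: you make explicit the same-sign requirement on $\beta_{ij,s},\beta_{ij,f}$ (which the paper uses tacitly when it asserts the minimum of the affine-in-$\sigma$ expression occurs at an endpoint), you spell out the projection/triangle-inequality passage from $|u_i-u_j|$ to $\|\mathbf r_i-\mathbf r_j\|$, and you treat the leader indices uniformly via $\beta_{b_1}\equiv 0$, $\beta_{b_N}\equiv 1$, whereas the paper simply asserts \eqref{uiuj} for all $i,j$.
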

\begin{proof}
Quadcopters $i$ and $j$ can both be enclosed by two balls with the same radius $\epsilon$ but different centers located at $\mathbf{r}_i(t)$ and $\mathbf{r}_j(t)$ ($\mathbf{r}_i(t)$ and $\mathbf{r}_j(t)$ are the actual position of quadcopters $i$ and $j$ at time $t$). If safety conditions \eqref{boundedness}, inter-agent collision avoidance can be assured by satisfying the following condition:
\begin{equation}\label{colavoid}
    \bigwedge_{i=1}^{N-1}\bigwedge_{j=i+1}^N\left(\left|u_i(t)-u_j(t)\right|\leq 2\left(\delta+\epsilon\right)\right),\qquad \forall t\in \left[t_s,t_f\right].
\end{equation}
When $u_i$ and $u_j$ coordinates of different quadcopters $i$ and $j$ are defined by \eqref{ui}, the following relation holds:
\begin{equation}\label{uiuj}
    u_i(t)-u_j(t)=\left(\beta_i(t)-\beta_j(t)\right)\left(u_{b_N}(t)-u_{b_1}(t)\right).
\end{equation}
Per Eq. \eqref{betai}, $\beta_i=\left(1-\sigma\right)\beta_{i,s}+\sigma\beta_{i,f}$ and $\beta_j=\left(1-\sigma\right)\beta_{j,s}+\sigma\beta_{j,f}$ can be substituted into Eq. \eqref{uiuj}; Eq. \eqref{betai} can be rewritten as follows:
\begin{equation}\label{uiujconverted}
\resizebox{0.99\hsize}{!}{%
$
    u_i(t)-u_j(t)=\left(\beta_{ij,s}+\sigma\left(t,t_s,t_f\right)\left(\beta_{ij,f}-\beta_{ij,s}\right)\right)\left(u_{b_N}(t)-u_{b_1}(t)\right).
    $
    }
\end{equation}
Because $\sigma\left(t,t_s,t_f\right)$ is strictly increasing over $\left[t_s,t_f\right]$, the right-hand side of Eq. \eqref{uiujconverted} reaches its minimum value, over $\left[t_s,t_f\right]$, either $t=t_s$, when $\sigma=0$, or $t=t_f$, when $\sigma=1$:
\[
\resizebox{0.99\hsize}{!}{%
$
\min\limits_{t\in \left[t_s,t_f\right]}\left|\beta_{ij,s}+\sigma\left(t,t_s,t_f\right)\left(\beta_{ij,f}-\beta_{ij,s}\right)\right|=\min\left\{\left|\beta_{ij,s}\right|,\left|\beta_{ij,f}\right|\right\}=\beta^*,
$
}
\]
\[
\resizebox{0.99\hsize}{!}{%
$
\min\limits_{t\in \left[t_s,t_f\right]}\left(u_{b_N}(t)-u_{b_1}(t)\right)=\min\left\{u_{b_N,f}-u_{b_1,f},u_{b_N,s}-u_{b_1,s}\right\}=d_{\mathrm{min}}.
$
}
\]
This implies that
\[
\min\limits_{t\in \left[t_s,t_f\right]}\left(u_i(t)-u_j(t)\right)\geq d_{\mathrm{\min}}\beta^*,\qquad i\neq j,~i,j\in \mathcal{V}.
\]
Therefore, inter-agent collision avoidance \eqref{colavoid} is satisfied, if condition \eqref{mainconditionthm1} holds.

\end{proof}

\subsection{Heterogeneous Transformation Coordination in the $\hat{\mathbf{c}}_2-\hat{\mathbf{c}}_3$}\label{fold3}
Evolution of the UAVs in the plane made by $\hat{\mathbf{c}}_2$ and $\hat{\mathbf{c}}_2$ are defined  by 
\begin{equation}
\resizebox{0.99\hsize}{!}{%
$
   \begin{bmatrix}
   v_i(t)\\
   w_i(t)\\
   \end{bmatrix}
   = \left(1-\sigma(t,t_s,t_f)\right)\begin{bmatrix}
   v_{i,s}\\
   w_{i,s}\\
   \end{bmatrix}
  +\sigma(t,t_s,t_f)\begin{bmatrix}
   v_{i,f}\\
   w_{i,f}\\
   \end{bmatrix}
   ,\qquad \forall t\in \left[t_s,t_f\right].
   $
   }
\end{equation}


\section{RTD Acquisition}\label{RTD Tracking}
We first present the quadcopter dynamics in Section \eqref{Quadcopter Dynamics}. Then, we design a feedback linearization control in Section \ref{Quadcopter Trajectory Control} so that every quadcopter $i$ can stably  track the desired RTD trajectory $\mathbf{p}_i(t)$ and  safety conditions  \eqref{quadcopterpropeller}-\eqref{boundedness} are all satisfied.

\subsection{Quadcopter Dynamics}
\label{Quadcopter Dynamics}
This paper models quadcopter $i\in \mathcal{V}$ by dynamics \eqref{NonlinearDynamics} with the state vector $\mathbf{x}_i$ and input vector $\mathbf{u}_i$, and smooth functions $\mathbf{f}$ and $\mathbf{g}$ defined as follows:
\begin{subequations}
\label{xufg}
\begin{equation}
\resizebox{0.99\hsize}{!}{%
$
    \mathbf{x}_i=\begin{bmatrix}
x_i&y_i&z_i&\dot{x}_i&\dot{y}_i&\dot{z}_i&\phi_i&\theta_i&\psi_i&\dot{\phi}_i&\dot{\theta}_i&\dot{\psi}_i&p_i&\dot{p}_i
\end{bmatrix}
^T,
$
}
\end{equation}
\begin{equation}
    \mathbf{u}_i=\begin{bmatrix}
u_{p,i}&u_{\phi,i}&u_{\theta,i}&u_{\psi,i}
\end{bmatrix}
^T,
\end{equation}
\begin{equation}
    \mathbf{f}\left(\mathbf{x}_i\right)=\begin{bmatrix}
    \dot{\mathbf{r}}_i^T&\left({p_i\over m_i}\hat{\mathbf{k}}_{b,i}-g\hat{\mathbf{e}}_3\right)^T&\dot{\phi}_i&\dot{\theta}_i&\dot{\psi}_i&\mathbf{0}_{1\times 3}&\dot{p}_i&0
    \end{bmatrix}
    ^T
,
\end{equation}
\begin{equation}
   {\mathbf{g}}\left({\mathbf{x}}_i\right)=
\begin{bmatrix}
\mathbf{0}_{9\times 1}&\mathbf{0}_{9\times 3}\\
\mathbf{0}_{3\times 1}&\mathbf{I}_3\\
0&\mathbf{0}_{1\times 3}\\
1&\mathbf{0}_{1\times 3}\\
\end{bmatrix}
,
\end{equation}
\begin{equation}
  \mathbf{C}=\begin{bmatrix}
  \mathbf{I}_3&\mathbf{0}_{3\times 5}&\mathbf{0}_{3\times 1}&\mathbf{0}_{3\times 5}\\
  \mathbf{0}_{1\times 3}&\mathbf{0}_{1\times 5}&1&\mathbf{0}_{1\times 5}
  \end{bmatrix}
 ,
\end{equation}
\end{subequations}
where  $\mathbf{r}_i=\begin{bmatrix}
x_i&y_i&z_i
\end{bmatrix}^T$ is the actual position of quadcopter $i\in \mathcal{V}$; $\phi_i$, $\theta_i$, and $\psi_i$ are the roll, pitch, and yaw angles of quadcopter $i\in \mathcal{V}$; $p_i$ is the magnitude of the thrust force of quadcopter $i\in \mathcal{V}$; $m_i$ is the mass of quadcopter $i\in \mathcal{V}$, and $g=9.81m/s^2$ is the gravity acceleration. Also, unit $\hat{\mathbf{k}}_{b,i}$ is the unit vector assigning  the direction of the thrust force of quadcopter  $i\in \mathcal{V}$. 
\begin{figure}[ht]
\centering
\includegraphics[width=3.3   in]{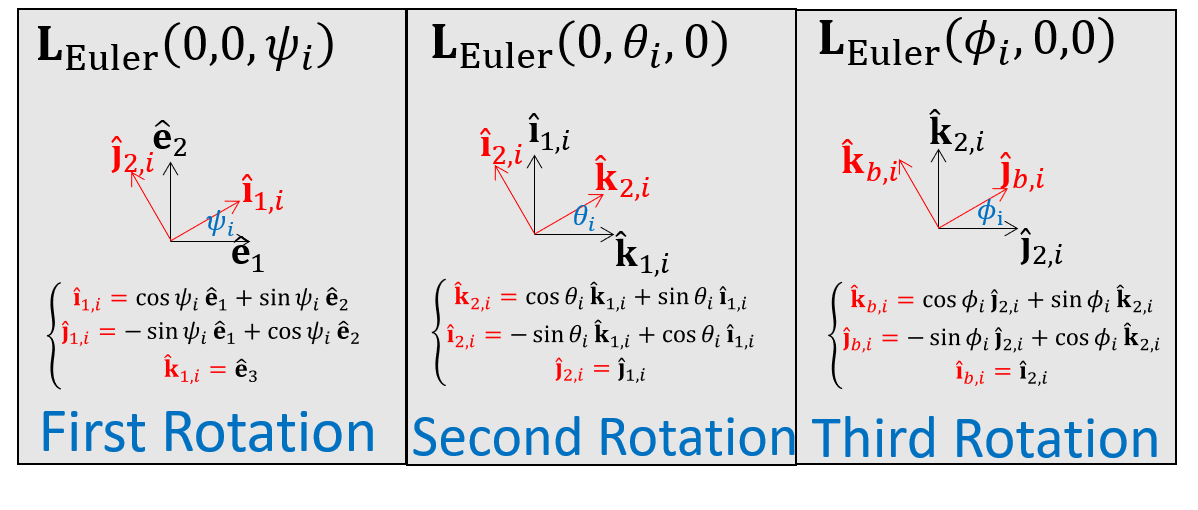}
\caption{Determination of rotation of quadcopter $i\in \mathcal{V}$ using the $3-2-1$ standard with roll angle $\phi_i$, pitch angle $\theta_i$, and yaw angle $\psi_i$.
}
\label{EulerAngleRotations}
\end{figure}
\subsubsection{Quadcopters' Angular Velocities and Accelerations}
We use 3-2-1 standard to determine orientation of quadcopter $i\in \mathcal{V}$ at time $t$ with the three rotations shown in Fig. \ref{EulerAngleRotations}. Given roll angle $\phi_i(t)$, pitch angle $\theta_i(t)$, and yaw angle $\psi_i(t)$ and the base vectors of the inertial coordinate system ($\hat{\mathbf{e}}_1$, $\hat{\mathbf{e}}_2$, and $\hat{\mathbf{e}}_3$), we obtain $\left(\hat{\mathbf{i}}_{1,i},\hat{\mathbf{j}}_{1,i},\hat{\mathbf{k}}_{1,i}\right)$, $\left(\hat{\mathbf{i}}_{2,i},\hat{\mathbf{j}}_{2,i},\hat{\mathbf{k}}_{2,i}\right)$, and $\left(\hat{\mathbf{i}}_{b,i},\hat{\mathbf{j}}_{b,i},\hat{\mathbf{k}}_{b,i}\right)$ as follows:
\begin{subequations}
\begin{equation}
        \hat{\mathbf{i}}_{1,i}
    =\mathbf{L}_{\mathrm{Euler}}^T\left(0,0,\psi_i\right)\hat{\mathbf{e}}_1=\begin{bmatrix}
    C_{\Psi_i}&S_{\psi_i}&0
    \end{bmatrix}^T,
\end{equation}
\begin{equation}
        \hat{\mathbf{j}}_{1,i}
    =\mathbf{L}_{\mathrm{Euler}}^T\left(0,0,\psi_i\right)\hat{\mathbf{e}}_2=\begin{bmatrix}
    -S_{\Psi_i}&C_{\psi_i}&0
    \end{bmatrix}^T,
\end{equation}
\begin{equation}
        \hat{\mathbf{k}}_{1,i}
    =\mathbf{L}_{\mathrm{Euler}}^T\left(0,0,\psi_i\right)\hat{\mathbf{e}}_3=\begin{bmatrix}
   0&0&1
    \end{bmatrix}^T,
\end{equation}
\end{subequations}
\begin{subequations}
\begin{equation}
\begin{split}
    \hat{\mathbf{i}}_{2,i}&
    =\mathbf{L}_{\mathrm{Euler}}^T\left(0,\theta_i,\psi_i\right)\hat{\mathbf{e}}_1=\begin{bmatrix}
    C_{\theta_i}C_{\psi_i}&C_{\theta_i}S_{\psi_i}&- S_{\theta_i}
    \end{bmatrix}
    ^T,
\end{split}
\end{equation}
\begin{equation}
\begin{split}
    \hat{\mathbf{j}}_{2,i}&
    =\mathbf{L}_{\mathrm{Euler}}^T\left(0,\theta_i,\psi_i\right)\hat{\mathbf{e}}_2=\begin{bmatrix}
    -S_{\psi_i}&C_{\psi_i}&0
    \end{bmatrix}
    ^T,
\end{split}
\end{equation}
\begin{equation}
       \begin{split}
    \hat{\mathbf{k}}_{2,i}&
    =\mathbf{L}_{\mathrm{Euler}}^T\left(0,\theta_i,\psi_i\right)\hat{\mathbf{e}}_3=\begin{bmatrix}
    S_{\theta_i}C_{\psi_i}&S_{\theta_i}S_{\psi_i}&C_{\theta_i}
    \end{bmatrix}
    ^T,
\end{split}
\end{equation}
\end{subequations}
\begin{subequations}
\begin{equation}
        \hat{\mathbf{i}}_{b,i}=\mathbf{L}_{\mathrm{Euler}}^T\left(\phi_i,\theta_i,\psi_i\right)\hat{\mathbf{e}}_1=\begin{bmatrix}
    C_{\theta_i}C_{\psi_i}&C_{\theta_i}S_{\psi_i}&- S_{\theta_i}
    \end{bmatrix}
    ^T,
\end{equation}
\begin{equation}
\begin{split}
            \hat{\mathbf{j}}_{b,i}
    =&\mathbf{L}_{\mathrm{Euler}}^T\left(\phi_i,\theta_i,\psi_i\right)\hat{\mathbf{e}}_2\\
    =&\begin{bmatrix}
    C_{\psi_i} S_{\phi_i} S_{\theta_i} - C_{\phi_i}S_{\psi_i}&C_{\phi_i}C_{\psi_i} +S_{\phi_i}S_{\psi_i}S_{\theta_i}& C_{\theta_i}S_{\phi_i}
    \end{bmatrix}^T,
\end{split}
\end{equation}
\begin{equation}
  \begin{split}
            \hat{\mathbf{k}}_{b,i}
    =&\mathbf{L}_{\mathrm{Euler}}^T\left(\phi_i,\theta_i,\psi_i\right)\hat{\mathbf{e}}_3\\
    =&\begin{bmatrix}
    S_{\phi_i}S_{\psi_i} +  C_{\phi_i}C_{\psi_i}S_{\theta_i}&C_{\phi_i}S_{\psi_i}S_{\theta_i}- S_{\phi_i}C_{\psi_i}&C_{\phi_i}C_{\theta_i}
    \end{bmatrix}^T.
\end{split}
\end{equation}
\end{subequations}
The angular velocity of quadcopter $i\in \mathcal{V}$ is then given by
\begin{equation}
\label{AngularVelocity}
    {\color{black}\bf{\omega}}_i=\dot{\psi}_i\hat{\mathbf{k}}_{1,i}+\dot{\theta}_i\hat{\mathbf{j}}_{2,i}+\dot{\phi}_i\hat{\mathbf{i}}_{b,i}.
\end{equation}
Substituting $\hat{\mathbf{k}}_{1,i}$, $\hat{\mathbf{j}}_{2,i}$, and $\hat{\mathbf{i}}_{b,i}$ into Eq. \eqref{AngularVelocity}, ${\color{black}{\omega}}_i=\left[\omega_{x,i}~\omega_{y,i}~\omega_{z,i}\right]^T$ is related by $\dot{\phi}_i$, $\dot{\theta}_i$, and $\dot{\psi}_i$ by
\begin{equation}
    \begin{bmatrix}
    \omega_{x,i}&
    \omega_{y,i}&
    \omega_{z,i}
    \end{bmatrix}^T
    =\mathbf{\Gamma}\left(\phi_i,\theta_i,\psi_i\right)
    \begin{bmatrix}
        \dot{\phi}_{i}&
        \dot{\theta}_{i}&
        \dot{\psi}_{i}
    \end{bmatrix}
    ^T
    ,
\end{equation}
where
\begin{equation}
\label{Eq55}
\mathbf{\Gamma}\left(\phi_i,\theta_i,\psi_i\right)=
    \begin{bmatrix}
    1&0&-\sin\theta_i\\
    0&\cos\phi_i&\cos\theta_i\sin\phi_i\\
    0&-\sin\phi_i&\cos\phi_i\cos\theta_i
    \end{bmatrix}
    .
\end{equation}

Angular acceleration of quadcopter $i\in \mathcal{V}$ is obtained by taking {\color{black}the} time derivative {\color{black}of} the angular velocity vector ${\color{black}\bf{\omega}}_i$ and related to control vector $\mathbf{u}_i$ by \cite{rastgoftar2021safe}:
\begin{equation}
\label{dwegai1}
\begin{split}
    \dot{{\color{black}\bf{\omega}}}_i=
    \tilde{\mathbf{B}}_{1,i} \begin{bmatrix}
    \mathbf{0}_{3\times 1}&\mathbf{I}_3
    \end{bmatrix}\mathbf{u}_i
    +\tilde{\mathbf{B}}_{2,i}.
\end{split}
\end{equation}
where
\begin{subequations}
\begin{equation}
    \tilde{\mathbf{B}}_{1,i}=\begin{bmatrix}
    \hat{\mathbf{i}}_{b,i}&\hat{\mathbf{j}}_{2,i}&\hat{\mathbf{k}}_{1,i}
    \end{bmatrix}
\end{equation}
\begin{equation}
   \tilde{\mathbf{B}}_{2,i}=\dot{\theta}_i\dot{\psi}_i\left(\hat{\mathbf{k}}_{1,i}\times \hat{\mathbf{j}}_{1,i} \right)+\dot{\phi}_{i}\left(\dot{\psi}_i\hat{\mathbf{k}}_{1,i}+\dot{\theta}_i\hat{\mathbf{j}}_{2,i}\right)\times \hat{\mathbf{i}}_{2,i}
\end{equation}
\end{subequations}
The rotational dynamics of quadcopter $i\in \mathcal{V}$ is given by \cite{zuo2010trajectory}
\begin{equation}
\label{dwegai2}
    \mathbf{J}_i\dot{\bf{\omega}}_i=
   -{\omega}_i\times \left(\mathbf{J}_i{\omega}_i\right)-{J}_{r,i}{\omega}_i\times\varpi_{i,r}\hat{\mathbf{k}}_{b,i}
   +\mathbf{T}_i
\end{equation}
where 
\begin{equation}
\label{Tiiiii}
\begin{split}
    \mathbf{T}_i=&\tau_{\phi,i}\hat{\mathbf{i}}_{b,i}+\tau_{\theta,i}\hat{\mathbf{j}}_{b,i}+\tau_{\psi,i}\hat{\mathbf{k}}_{b,i}=\tilde{\mathbf{B}}_{1,i}\begin{bmatrix}
    \tau_{\phi,i}&\tau_{\theta,i}&\tau_{\psi,i}
    \end{bmatrix}
    ^T
\end{split}
\end{equation}
is the quadcopter torque exerted on quadcopter $i\in \mathcal{V}$.
\begin{figure}[ht]
\centering
\includegraphics[width=3.3   in]{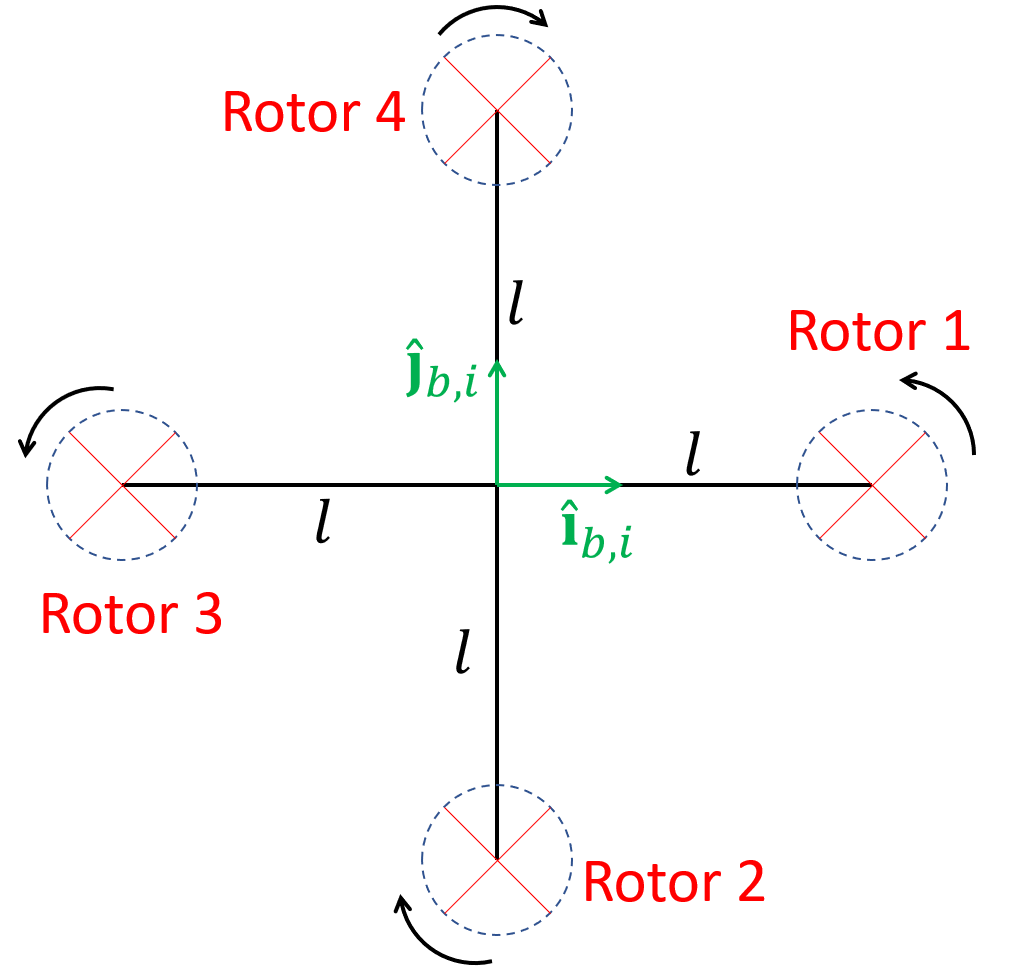}
\caption{Schematic of the plane of quadcopter $i\in \mathcal{V}$ defined by base vectors $\hat{\mathbf{i}}_{b,i}$ and $\hat{\mathbf{j}}_{b,i}$.
}
\label{RotationStandard}
\end{figure}
\subsection{Rotors' Angular Speeds}
The thrust force generated by rotor $j\in \left\{1,2,3,4\right\}$ of quadcopter $i\in \mathcal{V}$ is denoted by $p_{ij}$ and defined as follows:
\begin{equation}
    p_{ij}=b\varpi_{ij}^2,\qquad \forall i\in \mathcal{V},~j\in \left\{1,2,3,4\right\},
\end{equation}
where $b>0$ is the aerodynamic constant. The standard shown in  Fig. \ref{RotationStandard} is used to situate motors of quadcopter $i\in \mathcal{V}$, thus, components of torque $\mathbf{T}_i$, exerted on quadcopter $i\in \mathcal{V}$, are obtained as follows:
\begin{subequations}
\begin{equation}
    \tau_{\phi,i}=l\left(p_{i4}-p_{i2}\right)=bl\left(\varpi_{i4}^2-\varpi_{i2}^2\right),\qquad \forall i\in \mathcal{V},
\end{equation}
\begin{equation}
    \tau_{\theta,i}=l\left(p_{i3}-p_{i1}\right)=bl\left(\varpi_{i3}^2-\varpi_{i1}^2\right),\qquad \forall i\in \mathcal{V},
\end{equation}
\begin{equation}\label{taupsi}
    \tau_{\psi,i}=k\sum_{j=1}^4(-1)^j\varpi_{ij}^2,\qquad \forall i\in \mathcal{V},
\end{equation}
\end{subequations}
where $k$ is the aerodynamic constant in Eq. \eqref{taupsi}. Therefore, the rotors' angular speeds can be uniquely determined based on the thrust force ($p_i$) and control torque components ($\tau_{\phi,i}$, $\tau_{\theta,i}$, and $\tau_{\psi,i}$) by 
\begin{equation}
\label{mainangularspeed}
    \begin{bmatrix}
    p_i\\
    \tau_{\phi,i}\\
    \tau_{\theta,i}\\
    \tau_{\psi,i}
    \end{bmatrix}
    =
    \begin{bmatrix}
    b_i&b_i&b_i&b_i\\
    0&-b_il_i&0&b_il_i\\
    -b_il_i&0&b_il_i&0\\
    -k_i&k_i&-k_i&k_i
    \end{bmatrix}
    \begin{bmatrix}
    \varpi_{i1}^2\\
    \varpi_{i2}^2\\
    \varpi_{i3}^2\\
    \varpi_{i4}^2\\
    \end{bmatrix}
    ,
    \qquad \forall i\in \mathcal{V}.
\end{equation}
\begin{proposition}
Given  $p_i$, $\phi_i$, $\theta_i$, $\psi_i$, $\dot{p}_i$, $\dot{\phi}_i$, $\dot{\theta}_i$, $\dot{\psi}_i$, and $\mathbf{u}_i=\begin{bmatrix}
u_{p,i}&u_{\phi,i}&u_{\theta,i}&u_{\psi,i}
\end{bmatrix}^T$ at time $t\in \left[t_s,t_f\right]$, the angular speeds of rotors of quadcopter $i$ are determined by solving the following set of quadratic algebraic equations:
\begin{equation}
\label{quadraticmain}
    \mathbf{H}_{1,i}\begin{bmatrix}
    \varpi_{i1}^2\\
    \varpi_{i2}^2\\
    \varpi_{i3}^2\\
    \varpi_{i4}^2
    \end{bmatrix}
    +\mathbf{H}_{2,i}
    \begin{bmatrix}
    \varpi_{i1}\\
    \varpi_{i2}\\
    \varpi_{i3}\\
    \varpi_{i4}\\
    \end{bmatrix}
    +\mathbf{H}_{3,i}=\mathbf{0}_{4\times 1},\qquad \forall i\in \mathcal{V},
\end{equation}
where
\begin{subequations}
\begin{equation}
\mathbf{H}_{1,i}=
\begin{bmatrix}
1&\mathbf{0}_{1\times 3}\\
\mathbf{0}_{3\times 1}&\tilde{\mathbf{B}}_{1,i}
\end{bmatrix}
\begin{bmatrix}
    b_i&b_i&b_i&b_i\\
    0&-b_il_i&0&b_il_i\\
    -b_il_i&0&b_il_i&0\\
    -k_i&k_i&-k_i&k_i
    \end{bmatrix}
    \end{equation}
\begin{equation}
    \mathbf{H}_{2,i}=J_{r,i}\begin{bmatrix}
    0&0&0&0\\
    \omega_i\times \hat{\mathbf{k}}_{b,i}&-\omega_i\times \hat{\mathbf{k}}_{b,i}& \omega_i\times \hat{\mathbf{k}}_{b,i}&-\omega_i\times \hat{\mathbf{k}}_{b,i}\\
    \end{bmatrix}
\end{equation}
\begin{equation}
 \mathbf{H}_{3,i}=\begin{bmatrix}
    -1&\mathbf{0}_{1\times 3}\\
    \mathbf{0}_{3\times 1}&-\mathbf{B}_{1,i}
    \end{bmatrix}\mathbf{u}_i-\begin{bmatrix}
    0\\
    \mathbf{J}_i\tilde{\mathbf{B}}_{2,i}+{\omega}_i\times \left(\mathbf{J}_i{\omega}_i\right)
    \end{bmatrix}
    .
\end{equation}
\end{subequations}
\end{proposition}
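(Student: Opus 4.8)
The plan is to assemble \eqref{quadraticmain} by writing down, in the unknowns $\varpi_{i1},\dots,\varpi_{i4}$, the four physical identities that the rotor speeds must satisfy at time $t$ --- one scalar thrust balance and three torque components --- and then sorting the resulting expressions by their degree in these four variables.

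I would begin from the model \eqref{NonlinearDynamics}--\eqref{xufg}, which gives $\ddot{\phi}_i=u_{\phi,i}$, $\ddot{\theta}_i=u_{\theta,i}$, $\ddot{\psi}_i=u_{\psi,i}$ and $\ddot{p}_i=u_{p,i}$. Substituting $(\ddot\phi_i,\ddot\theta_i,\ddot\psi_i)$ into \eqref{dwegai1} shows that the angular acceleration
\[
\dot{\omega}_i=\tilde{\mathbf{B}}_{1,i}\begin{bmatrix}u_{\phi,i}&u_{\theta,i}&u_{\psi,i}\end{bmatrix}^{T}+\tilde{\mathbf{B}}_{2,i}
\]
is fixed entirely by the data of the proposition, since $\tilde{\mathbf{B}}_{1,i}$ and $\tilde{\mathbf{B}}_{2,i}$ are built only from $\phi_i,\theta_i,\psi_i,\dot\phi_i,\dot\theta_i,\dot\psi_i$ (through $\hat{\mathbf{i}}_{b,i},\hat{\mathbf{j}}_{2,i},\hat{\mathbf{k}}_{1,i},\hat{\mathbf{i}}_{2,i},\hat{\mathbf{j}}_{1,i}$), and so are $\omega_i=\mathbf{\Gamma}(\phi_i,\theta_i,\psi_i)[\dot\phi_i~\dot\theta_i~\dot\psi_i]^{T}$ and $\hat{\mathbf{k}}_{b,i}$. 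The point worth checking carefully here is that $\dot{\omega}_i$ carries no hidden dependence on the rotor speeds, so that every term descending from it lands in the constant block $\mathbf{H}_{3,i}$ rather than contaminating $\mathbf{H}_{1,i}$ or $\mathbf{H}_{2,i}$.

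Next I would write the body torque two ways and equate them. Solving the rotational dynamics \eqref{dwegai2} for $\mathbf{T}_i$ and inserting the $\dot{\omega}_i$ above gives
\[
\mathbf{T}_i=\mathbf{J}_i\dot{\omega}_i+\omega_i\times(\mathbf{J}_i\omega_i)+J_{r,i}\,\omega_i\times\varpi_{i,r}\hat{\mathbf{k}}_{b,i},
\]
where the residual rotor speed $\varpi_{i,r}=\varpi_{i1}-\varpi_{i2}+\varpi_{i3}-\varpi_{i4}$ is \emph{linear} in the unknowns; writing $\varpi_{i,r}\hat{\mathbf{k}}_{b,i}$ as $\hat{\mathbf{k}}_{b,i}$ times $[\,1~{-1}~1~{-1}\,]$ applied to $[\varpi_{i1},\varpi_{i2},\varpi_{i3},\varpi_{i4}]^{T}$ turns this term into the alternating $\pm(\omega_i\times\hat{\mathbf{k}}_{b,i})$ columns of $\mathbf{H}_{2,i}$, with the leading zero row expressing that the scalar thrust balance has no gyroscopic reaction. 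On the other side, \eqref{Tiiiii} gives $\mathbf{T}_i=\tilde{\mathbf{B}}_{1,i}[\tau_{\phi,i},\tau_{\theta,i},\tau_{\psi,i}]^{T}$, while the fixed mixing matrix of \eqref{mainangularspeed} expresses $(p_i,\tau_{\phi,i},\tau_{\theta,i},\tau_{\psi,i})$ as linear combinations of $(\varpi_{i1}^2,\varpi_{i2}^2,\varpi_{i3}^2,\varpi_{i4}^2)$ --- hence \emph{quadratic} in the unknowns. Stacking the thrust row of \eqref{mainangularspeed} above the three torque components puts this quadratic part exactly in the form $\mathbf{H}_{1,i}[\varpi_{i1}^2,\varpi_{i2}^2,\varpi_{i3}^2,\varpi_{i4}^2]^{T}$.

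Finally, equating the two torque expressions, appending the thrust identity, and moving the $\omega_i\times(\mathbf{J}_i\omega_i)$ term, the $\mathbf{J}_i\tilde{\mathbf{B}}_{2,i}$ term, and the input term $\tilde{\mathbf{B}}_{1,i}[u_{\phi,i},u_{\theta,i},u_{\psi,i}]^{T}$ (together with the commanded thrust, which appears as $-u_{p,i}$) to the left-hand side leaves precisely the left-hand side of \eqref{quadraticmain}, proving the proposition. I expect the substantive effort to be purely organizational bookkeeping: verifying that no monomial of degree higher than two in the rotor speeds survives, that the block partition of $\mathbf{H}_{1,i}$, $\mathbf{H}_{2,i}$, $\mathbf{H}_{3,i}$ comes out as written, and that the rotor numbering and sign conventions of Fig. \ref{RotationStandard} line up with the signs in the mixing matrix, in the last row of $\mathbf{H}_{1,i}$, and in $\mathbf{H}_{2,i}$. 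One could go further and ask when \eqref{quadraticmain} has a unique admissible solution --- it collapses to a linear system in the squared speeds (essentially \eqref{mainangularspeed}) as $J_{r,i}\to0$, so an implicit-function / perturbation argument would apply --- but the statement only asserts that the actual rotor speeds solve \eqref{quadraticmain}, which the derivation above establishes.
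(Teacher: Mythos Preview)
Your proposal is correct and follows essentially the same route as the paper: combine \eqref{mainangularspeed} with \eqref{Tiiiii} to produce the quadratic block $\mathbf{H}_{1,i}$, isolate the gyroscopic reaction $J_{r,i}\,\omega_i\times\varpi_{i,r}\hat{\mathbf{k}}_{b,i}$ as the linear block $\mathbf{H}_{2,i}$, and push the remaining inertial and control terms from \eqref{dwegai1}--\eqref{dwegai2} into $\mathbf{H}_{3,i}$. One small slip: the ``input term'' you move into $\mathbf{H}_{3,i}$ should be $\mathbf{J}_i\tilde{\mathbf{B}}_{1,i}[u_{\phi,i},u_{\theta,i},u_{\psi,i}]^{T}=\mathbf{B}_{1,i}[u_{\phi,i},u_{\theta,i},u_{\psi,i}]^{T}$, not $\tilde{\mathbf{B}}_{1,i}[\cdots]^{T}$, since it arises from $\mathbf{J}_i\dot{\omega}_i$.
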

\begin{proof}
By considering Eqs. \eqref{Tiiiii} and \eqref{mainangularspeed}, Eq. \eqref{mainangularspeed} can be rewritten as
\[
\begin{bmatrix}
\varpi_{i1}^2&\cdots&\varpi_{i1}^2
\end{bmatrix}
^T=\mathbf{H}_{1,i}\begin{bmatrix}
p_i&\mathbf{T}_i^T
\end{bmatrix}
^T,\qquad \forall i\in \mathcal{V}.
\]
By substituting $\dot{\omega}_i$ from Eq. \eqref{dwegai1}, $\mathbf{T}_i$ is obtained as follows:
\begin{equation}
\label{88badeq}
    \mathbf{T}_i=\mathbf{B}_{1,i}\begin{bmatrix}
    \mathbf{0}_{3\times 1}&\mathbf{I}_3
    \end{bmatrix}\mathbf{u}_i+\mathbf{B}_{2,i},\qquad \forall i\in \mathcal{V},
\end{equation}
where
\begin{subequations}
\begin{equation}
    \mathbf{B}_{1,i}=\mathbf{J}_i\tilde{\mathbf{B}}_{1,i},
\end{equation}
\begin{equation}
\begin{split}
    \mathbf{B}_{2,i}=&\mathbf{J}_i\tilde{\mathbf{B}}_{2,i}+{\omega}_i\times \left(\mathbf{J}_i{\omega}_i\right)+{J}_{r,i}{\omega}_i\times\varpi_{i,r}\hat{{\mathbf{k}}}_{b,i}\\
    =&\mathbf{J}_i\tilde{\mathbf{B}}_{2,i}+{\omega}_i\times \left(\mathbf{J}_i{\omega}_i\right)-\mathbf{H}_{2,i}\begin{bmatrix}
\varpi_{i1}&\cdots&\varpi_{i1}
\end{bmatrix}
^T.
\end{split}
\end{equation}
\end{subequations}
By substituting $\mathbf{T}_i$ obtained in  Eq. \eqref{88badeq}, the angular speeds of the rotors of quadcopter $i\in \mathcal{V}$ are assigned by Eq. \eqref{quadraticmain}.
\end{proof}

\subsection{Quadcopter Trajectory Control}
\label{Quadcopter Trajectory Control}
In this section, we use the feedback linization control method to design trajectory control $\mathbf{u}_i$ for every quadcopter $i\in \mathcal{V}$. To this end, we provide Definition \ref{liederivative} to formally define Lie derivative before proceeding.

\begin{definition}\label{liederivative}
Let $y:\mathbb{R}^{p}\rightarrow \mathbb{R}$ and $\mathbf{f}:\mathbb{R}^{p}\rightarrow \mathbb{R}^p$ be smooth functions. The Lie derivative $y$ with respect to $\mathbf{f}$ is defined as follows:
\[
L_{\mathbf{f}}y=\bigtriangledown y \mathbf{f}.
\]
\end{definition}

We define state transformation $\mathbf{z}_i=\rightarrow\left(\mathbf{z}_i,\mathbf{y}_i\right)$ given by
\begin{equation}
    \mathbf{z}_i=\begin{bmatrix}\mathbf{r}_i^T&\dot{\mathbf{r}}_i^T&\ddot{\mathbf{r}}_i^T&\dddot{\mathbf{r}}_i^T&\psi_i&\dot{\psi}_i\end{bmatrix}^T
\in \mathbb{R}^{14\times 1},\qquad \forall i\in \mathcal{V},
\end{equation}
where $\mathbf{z}_i$ is updated by the following linear-time-invariant dynamics:
\begin{equation}\label{lineardynamicsfel}
    \dot{\mathbf{z}}_i=\mathbf{A}_{\mathrm{SF}}\mathbf{z}_i+\mathbf{B}_{\mathrm{SF}}\mathbf{v}_i,\qquad \forall i\in \mathcal{V},
\end{equation}
and
\begin{subequations}
\begin{equation}
    \mathbf{A}_{\mathrm{SF}}=\begin{bmatrix}
    \mathbf{0}_{9\times 3}&\mathbf{I}_3&\mathbf{0}_{9\times 1}&\mathbf{0}_{9\times 1}\\
    \mathbf{0}_{3\times 3}&\mathbf{0}_{3\times 9}&\mathbf{0}_{3\times 1}&\mathbf{0}_{3\times 1}\\
    \mathbf{0}_{1\times 3}&\mathbf{0}_{1\times 9}&0&1\\
    \mathbf{0}_{1\times 3}&\mathbf{0}_{1\times 9}&0&0\\
    \end{bmatrix}
    ,
   \end{equation}
    \begin{equation}
    \mathbf{B}_{\mathrm{SF}}=\begin{bmatrix}
    \mathbf{0}_{9\times 3}&\mathbf{0}_{9\times 1}\\
    \mathbf{I}_{3}&\mathbf{0}_{3\times 1}\\
    \mathbf{0}_{1\times 3}&0\\
    \mathbf{0}_{1\times 3}&1\\
    \end{bmatrix}
    .
\end{equation}
\end{subequations}

Here, $\mathbf{v}_i$ is related to the control input of quadcopter $i\in \mathcal{V}$, denoted by $\mathbf{u}_i$, by
\vspace{-0.3cm}
\begin{equation}
    \mathbf{v}_i=\mathbf{M}_{1,i}\mathbf{u}_i+\mathbf{M}_{2,i},
\end{equation}
where
\begin{subequations}
\vspace{-0.2cm}
\begin{equation}
  \mathbf{M}_{1,i}=  \begin{bmatrix}
        L_{{\mathbf{g}}_{_{1}}}L_{{\mathbf{f}}}^{3}x_i& L_{{\mathbf{g}}_{_{2}}}L_{{\mathbf{f}}}^{3}x_i& L_{{\mathbf{g}}_{_{3}}}L_{{\mathbf{f}}}^{3}x_i& L_{{\mathbf{g}}_{_{4}}}L_{{\mathbf{f}}}^{3}x_i\\
        L_{{\mathbf{g}}_{_{1}}}L_{{\mathbf{f}}}^{3}y_i& L_{{\mathbf{g}}_{_{2}}}L_{{\mathbf{f}}}^{3}y_i& L_{{\mathbf{g}}_{_{3}}}L_{{\mathbf{f}}}^{3}y_i& L_{{\mathbf{g}}_{_{4}}}L_{{\mathbf{f}}}^{3}y_i\\
        L_{{\mathbf{g}}_{_{1}}}L_{{\mathbf{f}}}^{3}z_i& L_{{\mathbf{g}}_{_{2}}}L_{{\mathbf{f}}}^{3}z_i& L_{{\mathbf{g}}_{_{3}}}L_{{\mathbf{f}}}^{3}z_i& L_{{\mathbf{g}}_{_{4}}}L_{{\mathbf{f}}}^{3}z_i\\
         L_{{\mathbf{g}}_{_{1}}}L_{{\mathbf{f}}}\psi_i& L_{{\mathbf{g}}_{_{2}}}L_{{\mathbf{f}}}\psi_i& L_{{\mathbf{g}}_{_{3}}}L_{{\mathbf{f}}}\psi_i& L_{{\mathbf{g}}_{_{4}}}L_{{\mathbf{f}}}\psi_i\\
        \end{bmatrix}
        \in \mathbb{R}^{14\times 14}
        ,
\end{equation}
\vspace{-0.2cm}
\begin{equation}
  \mathbf{M}_{2,i}=  \begin{bmatrix}
        L_{{\mathbf{f}}}^{4}x_i&
       L_{{\mathbf{f}}}^{4}y_i&
        L_{{\mathbf{f}}}^{4}z_i&
         L_{{\mathbf{f}}}^2\psi_i
        \end{bmatrix}
        ^T\in \mathbb{R}^{14\times 1}
        .
\end{equation}
\end{subequations}
The control design objective is to choose $\mathbf{u}_i$ such that $\mathbf{y}_i$ stably tracks desired output $\mathbf{y}_{i,d}=\begin{bmatrix}
\mathbf{p}_{i}^T&\psi_{i,d}
\end{bmatrix}$ where $\mathbf{p}_{i}$  and $\psi_{i,d}$ are the global desired trajectory and desired yaw angle of quadcopter $i\in \mathcal{V}$. Without loss of generality, this paper assumes that $\psi_{i,d}(t)=0$ at any time $t$. To achieve the control objective, we define desired state vector
\begin{equation}\label{zid}
    \mathbf{z}_{i,d}=\begin{bmatrix}
    \mathbf{r}_{i,d}^T&\dot{\mathbf{r}}_{i,d}^T&\ddot{\mathbf{r}}_{i,d}^T&\dddot{\mathbf{r}}_{i,d}^T&\psi_{i,d}&\dot{\psi}_{i,d}
    \end{bmatrix}
    ,\qquad \forall i\in \mathcal{V},
\end{equation}
and choose
\vspace{-0.2cm}
\begin{equation}\label{vi}
    \mathbf{v}_i=\mathbf{K}_{i}\left(\mathbf{z}_{i,d}-\mathbf{z}_i\right),\qquad \forall i\in \mathcal{V},
\end{equation}
such that $\mathbf{A}_{\mathrm{SF}}-\mathbf{B}_{\mathrm{SF}}\mathbf{K}_{i}$ is Hurwitz. Then, the control input of quadcopter $i\in \mathcal{V}$ is obtained by
\begin{equation}
    \mathbf{u}_i=\mathbf{M}_{1,i}^{-1}\left(\mathbf{v}_i-\mathbf{M}_{2,i}\right). 
\end{equation}

\begin{theorem}
Assume $\mathbf{z}_{i,d}$, defined by \eqref{zid}, is a bounded input, $\mathbf{v}_i$ is selected by \eqref{vi}, and control gain matrix $\mathbf{K}_i$ is selected such that $\mathbf{A}_{\mathrm{SF}}-\mathbf{B}_{\mathrm{SF}}\mathbf{K}_{i}$ is Hurwitz. Then, there exists a unique $t_f^*>t_s$ such that safety conditions  \eqref{quadcopterpropeller} and \eqref{boundedness} are satisfied by choosing any $t_f\geq t_f^*$.
\end{theorem}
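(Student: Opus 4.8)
The plan is to pass to the tracking-error dynamics, bound the error by a quantity that shrinks as the horizon $t_f-t_s$ grows, and then certify the two safety requirements \eqref{quadcopterpropeller} and \eqref{boundedness} separately. First I would note that the reference state $\mathbf{z}_{i,d}$ in \eqref{zid}, being built from the smooth RTD trajectory $\mathbf{p}_i$ and from $\psi_{i,d}\equiv 0$, satisfies the very same chain-of-integrators dynamics \eqref{lineardynamicsfel} as $\mathbf{z}_i$, driven by the reference ``snap/yaw-acceleration'' input $\mathbf{v}_{i,d}:=\begin{bmatrix}(\mathbf{p}_i^{(4)})^T&0\end{bmatrix}^T$, i.e. $\dot{\mathbf{z}}_{i,d}=\mathbf{A}_{\mathrm{SF}}\mathbf{z}_{i,d}+\mathbf{B}_{\mathrm{SF}}\mathbf{v}_{i,d}$. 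Subtracting this from \eqref{lineardynamicsfel} with $\mathbf{v}_i$ chosen by \eqref{vi}, the error $\mathbf{e}_i:=\mathbf{z}_{i,d}-\mathbf{z}_i$ obeys the linear system $\dot{\mathbf{e}}_i=(\mathbf{A}_{\mathrm{SF}}-\mathbf{B}_{\mathrm{SF}}\mathbf{K}_i)\mathbf{e}_i+\mathbf{B}_{\mathrm{SF}}\mathbf{v}_{i,d}$, whose homogeneous part is Hurwitz by hypothesis.

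Fix $M_i\ge 1$, $\lambda_i>0$ with $\|\exp((\mathbf{A}_{\mathrm{SF}}-\mathbf{B}_{\mathrm{SF}}\mathbf{K}_i)\tau)\|\le M_i e^{-\lambda_i\tau}$. Variation of parameters then gives
\[
\|\mathbf{r}_i(t)-\mathbf{p}_i(t)\|\le\|\mathbf{e}_i(t)\|\le M_i\|\mathbf{e}_i(t_s)\|+\frac{M_i\|\mathbf{B}_{\mathrm{SF}}\|}{\lambda_i}\sup_{\tau\in[t_s,t_f]}\|\mathbf{v}_{i,d}(\tau)\|,\qquad t\in[t_s,t_f],
\]
(with $\mathbf{e}_i(t_s)=\mathbf{0}$ if the quadcopter enters on the reference). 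To make the horizon-dependence explicit I would reparametrise by $s=(t-t_s)/(t_f-t_s)$ and $T=t_f-t_s$: the rigid-rotation, homogeneous, and heterogeneous parts of $\mathbf{a}_i$ — hence of $\mathbf{p}_i$ apart from the rigidly translating term $\mathbf{d}$ — are fixed smooth functions of the quintic timing function $\sigma$, so their $k$-th time derivative equals $T^{-k}$ times a bound independent of $T$, while the constant-velocity endpoint conditions force the fourth derivative of $\mathbf{d}$ to be $O(T^{-3})$. Hence $\sup_\tau\|\mathbf{v}_{i,d}(\tau)\|\le\alpha_i(T)$ with $\alpha_i$ continuous, strictly decreasing, and $\alpha_i(T)\to0$ as $T\to\infty$ (and $\|\mathbf{e}_i(t_s)\|$ is likewise nonincreasing in $T$ with limit $0$, or zero outright). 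The right-hand side above is therefore a continuous, strictly decreasing function of $t_f$ tending to $0$, so by the intermediate value theorem there is a unique $t_f^{(1)}>t_s$ at which it equals $\delta$, and \eqref{boundedness} holds for every $t_f\ge t_f^{(1)}$.

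It remains to control the rotor speeds. By the Proposition, $\varpi_{ij}(t)$ are roots of the quadratic system \eqref{quadraticmain}, whose coefficients depend continuously on the quadcopter state (a diffeomorphic image of $\mathbf{z}_i$), on $\omega_i$, and on $\mathbf{u}_i=\mathbf{M}_{1,i}^{-1}(\mathbf{v}_i-\mathbf{M}_{2,i})$ with $\mathbf{v}_i=\mathbf{K}_i\mathbf{e}_i$. For $t_f\ge t_f^{(1)}$ the state remains in a $\delta$-neighbourhood of $\mathbf{z}_{i,d}$, which for large $T$ approaches the uniform-translation/hover operating point ($\mathbf{p}_i^{(2)},\mathbf{p}_i^{(3)}\to0$ forces the thrust $p_i\to m_i g$, vertical, with vanishing body rates), where $\mathbf{M}_{1,i}$ is invertible and \eqref{quadraticmain} has a unique branch at the common hover speed $\varpi_i^{\mathrm{hov}}$ producing thrust $m_i g$; by continuity of that branch the actual $\varpi_{ij}(t)$ stay in any prescribed neighbourhood of $\varpi_i^{\mathrm{hov}}$ once $T$ is large, the neighbourhood radius again being nonincreasing in $T$. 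Under the standing requirement that each quadcopter hovers strictly below saturation, $\varpi_i^{\mathrm{hov}}<\varpi_{\mathrm{max}}$, so there is a unique $t_f^{(2)}>t_s$ past which \eqref{quadcopterpropeller} holds. Taking $t_f^*=\max\{t_f^{(1)},t_f^{(2)}\}>t_s$ then certifies both safety conditions for every $t_f\ge t_f^*$, and $t_f^*$ is the unique such threshold since each $t_f^{(k)}$ is the unique crossing point of a continuous strictly monotone function.

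I expect the rotor-speed step to be the genuine obstacle: one must argue that for slow enough references the feedback-linearising control $\mathbf{u}_i$ and the resulting root of the allocation equations \eqref{quadraticmain} remain in a well-defined, near-hover regime — in particular that $\mathbf{M}_{1,i}$ stays invertible along the closed-loop trajectory, that the correct branch of the nonlinear rotor-allocation system is selected and depends continuously on the data, and that the hover speeds are strictly feasible. The tracking-error part, by contrast, is a routine Hurwitz/variation-of-parameters estimate once the $T^{-k}$ scaling of the $\sigma$-based trajectory derivatives is observed.
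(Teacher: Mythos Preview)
Your proposal is correct and follows essentially the same skeleton as the paper: derive the tracking-error dynamics driven by the snap $\mathbf{p}_i^{(4)}$, observe that this forcing vanishes as the horizon $t_f-t_s\to\infty$ so that \eqref{boundedness} can be met past some $t_f'$, argue separately that the shrinking reference derivatives keep the rotor speeds feasible past some $t_f''$, and set $t_f^*=\max\{t_f',t_f''\}$. Your version is considerably more careful than the paper's---the explicit variation-of-parameters bound, the $T^{-k}$ scaling from the quintic $\sigma$, the monotonicity/IVT argument for the uniqueness claim, and the near-hover continuity argument for the rotor allocation are all details the paper leaves implicit or unaddressed---but the underlying route is the same.
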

\begin{proof}
By substituting $\mathbf{v}_i$ from Eq. \eqref{vi}, Eq. \eqref{lineardynamicsfel} simplifies to
\begin{equation}\label{proofeq}
    \dot{\mathbf{z}}_i=\left(\mathbf{A}_{\mathrm{SF}}-\mathbf{B}_{\mathrm{SF}}\mathbf{K}_{i}\right){\mathbf{z}}_i+\mathbf{B}_{\mathrm{SF}}\mathbf{K}_{i}{\mathbf{z}}_{i,d},\qquad \forall i\in \mathcal{V}.
\end{equation}
If $\mathbf{A}_{\mathrm{SF}}-\mathbf{B}_{\mathrm{SF}}\mathbf{K}_{i}$ is Hurwitz and ${\mathbf{z}}_{i,d}$ is bounded, then, dynamics is Bounded Input Bounded Output (BIBO) stable which in turn implies that $\mathbf{z}_i(t)$ remains bounded at any time $t$.  
Now, we define $\mathbf{E}_{i}=\begin{bmatrix}
\left(\mathbf{z}_i-\mathbf{z}_{d,i}\right)^T&\psi_i&\dot{\psi}_i
\end{bmatrix}^T$ as the error, and  obtain the following error dynamics:
\begin{equation}
    \dot{\mathbf{E}}_i=\left(\mathbf{A}_{\mathrm{SF}}-\mathbf{B}_{\mathrm{SF}}\mathbf{K}_{i}\right){\mathbf{E}}_i+\begin{bmatrix}\mathbf{0}_{3\times 9}\\\mathbf{I}_3\\\mathbf{0}_{3\times 2}\end{bmatrix}\ddddot{\mathbf{p}}_i
\end{equation}
We say that  $\ddddot{\mathbf{p}}_i(t)\rightarrow \mathbf{0}$, if $\left(t_f-t_s\right)\rightarrow\infty$. Therefore, there exists a final time $t_f'$ such that safery condition \eqref{boundedness} is satisfied for every quadcopter $i\in \mathcal{V}$. Also, $\dot{\mathbf{p}}_i(t)$, $\ddot{\mathbf{p}}_i(t)$,  $\dddot{\mathbf{p}}_i(t)$, and $\ddddot{\mathbf{p}}_i(t)$ are decreased at any time $t\in \left[t_s,t_f\right]$, if $\left(t_f-t_s\right)\rightarrow\infty$. Therefore, there exists a final time $t_f^{''}$ such that the angular speeds of rotors of every quadcopter $i\in \mathcal{V}$ satisfy safety condition \eqref{quadcopterpropeller}. Therefore, safety conditions \eqref{quadcopterpropeller} and \eqref{boundedness} are both satisfied if we choose a final time $t_f\geq t_f^*$, where $t_f^*=\max\left\{t_f^{'},t_f^{''}\right\}$.
\end{proof}

\begin{table}
\centering
\caption{Parameters of quadcopter models used for simulation. The quadcopter paramters are selected from \eqref{xufg}.}
    \begin{tabular}{|c|c|c|}
    \hline
         Parameter ($ \forall i\in \mathcal{V}$)&Value  &Unit\\
         \hline
         $m_i=m$&$0.5$  &$kg$\\
         $g$&  $9.81$ &$m/s^2$\\
         $l_i=l$&  $0.25$  &$m$\\
         $J_{r,i}=J_r$& $3.357\times 10^{-5}$    &$kg~m^2$\\
         $J_{x,i}=J_x$& $0.0196$ &$kg~m^2$\\
         $J_{y,i}=J_y$&$0.0196 $  &$kg~m^2$\\
         $J_{z,i}=J_z$&$0.0264$ &$kg~m^2$\\
         $b_i=b$& $3\times 10^{-5}$ &$N~s^2/rad^2$\\
         $k_i=k$&$1.1\times 10^{-6}$&$N~s^2/rad^2$\\
         \hline
    \end{tabular}
    \label{quadparameters}
\end{table}
\begin{figure}[ht]
\centering
\includegraphics[width=3.3   in]{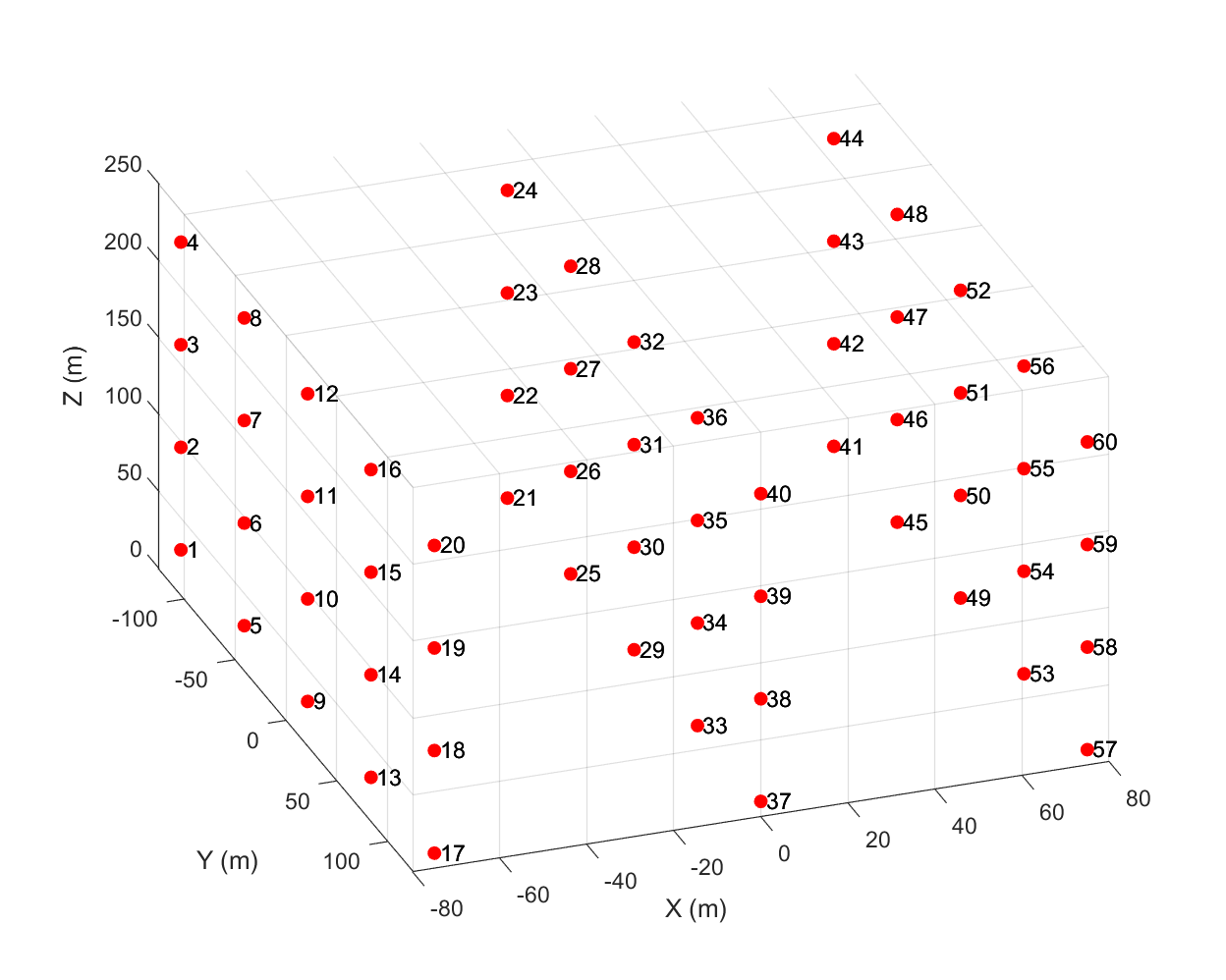}
\caption{Initial configuration of the quadcopter team forming a cuboid in the motion space.
}
\label{initial}
\end{figure}

\begin{figure}[ht]
\centering
\includegraphics[width=3.3   in]{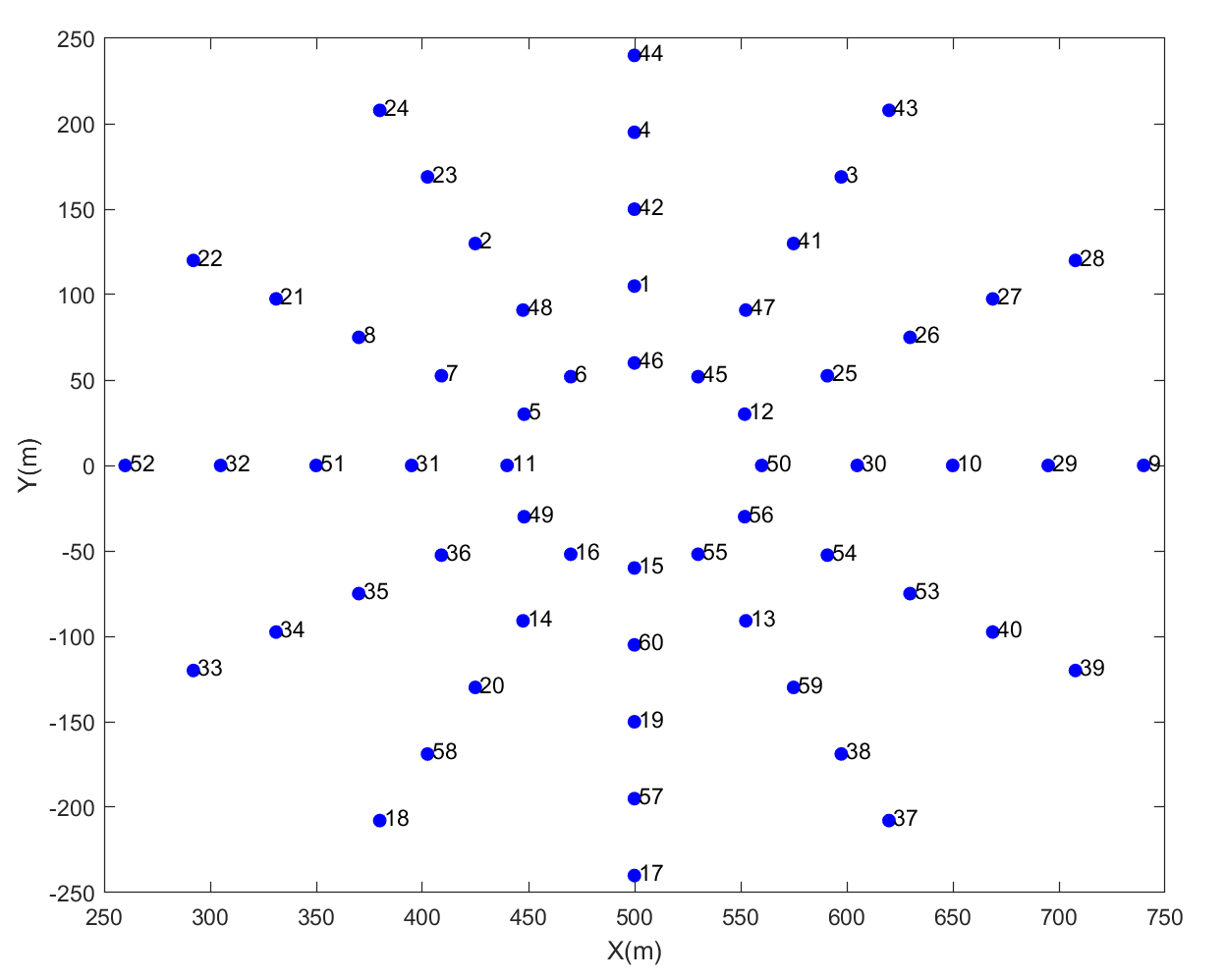}
\caption{Final configuration of the quadcopter team forming a disk in the $x-y$ plane.
}
\label{final}
\end{figure}

\begin{figure}
 \centering
 \subfigure[$\varpi_{i,1}~\forall i\in \mathcal{V}$]{\includegraphics[width=0.9\linewidth]{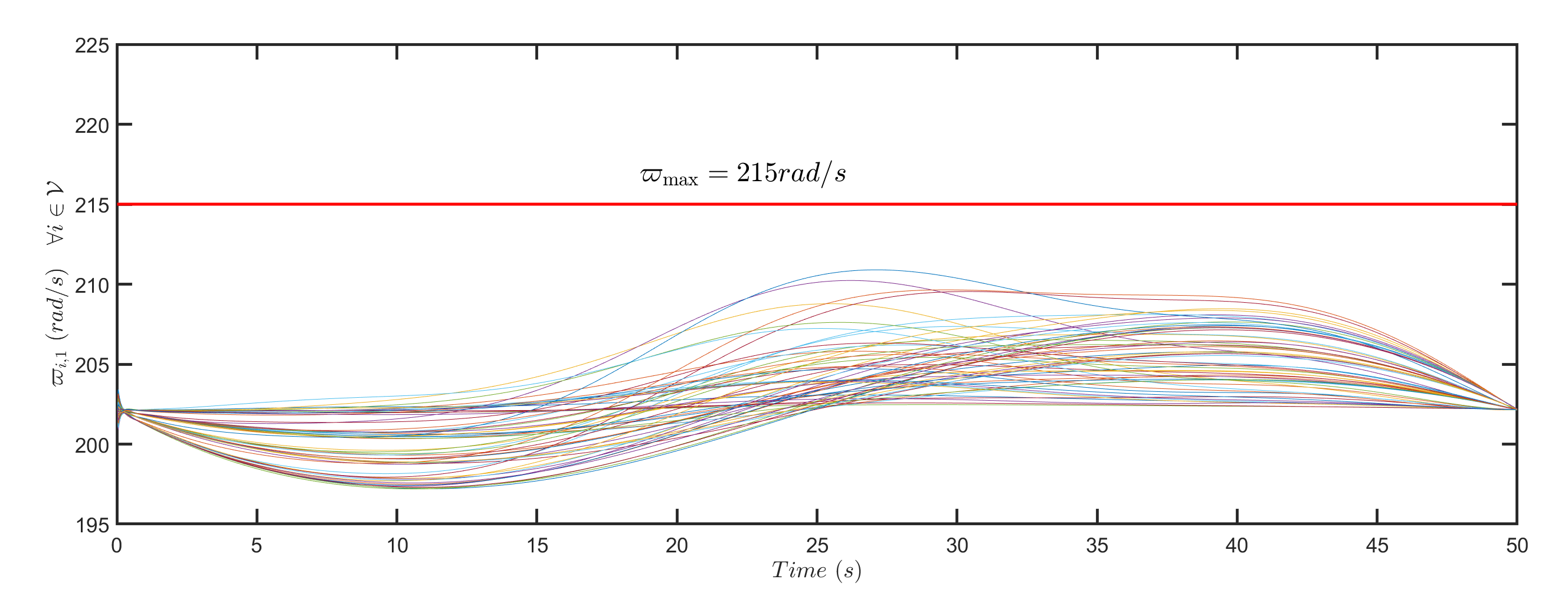}}
  \subfigure[$\varpi_{i,2}~\forall i\in \mathcal{V}$]{\includegraphics[width=0.9\linewidth]{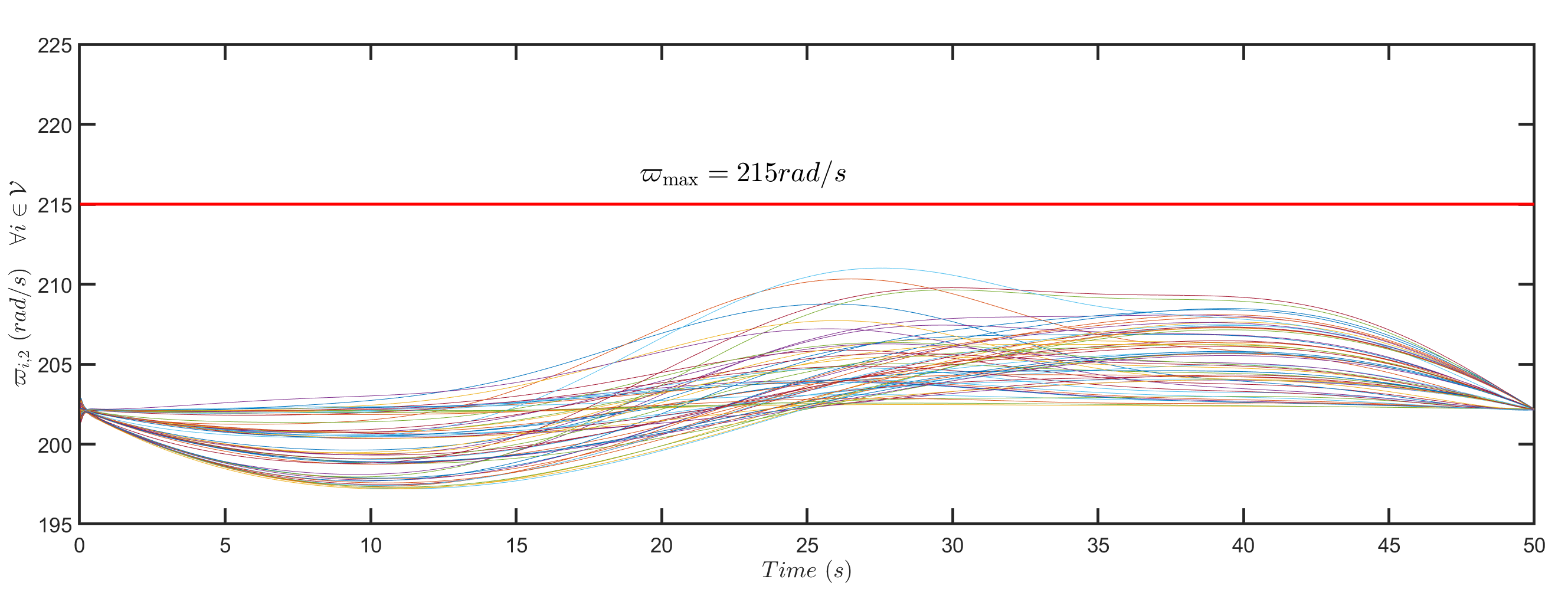}}
   \subfigure[$\varpi_{i,3}~\forall i\in \mathcal{V}$]{\includegraphics[width=0.9\linewidth]{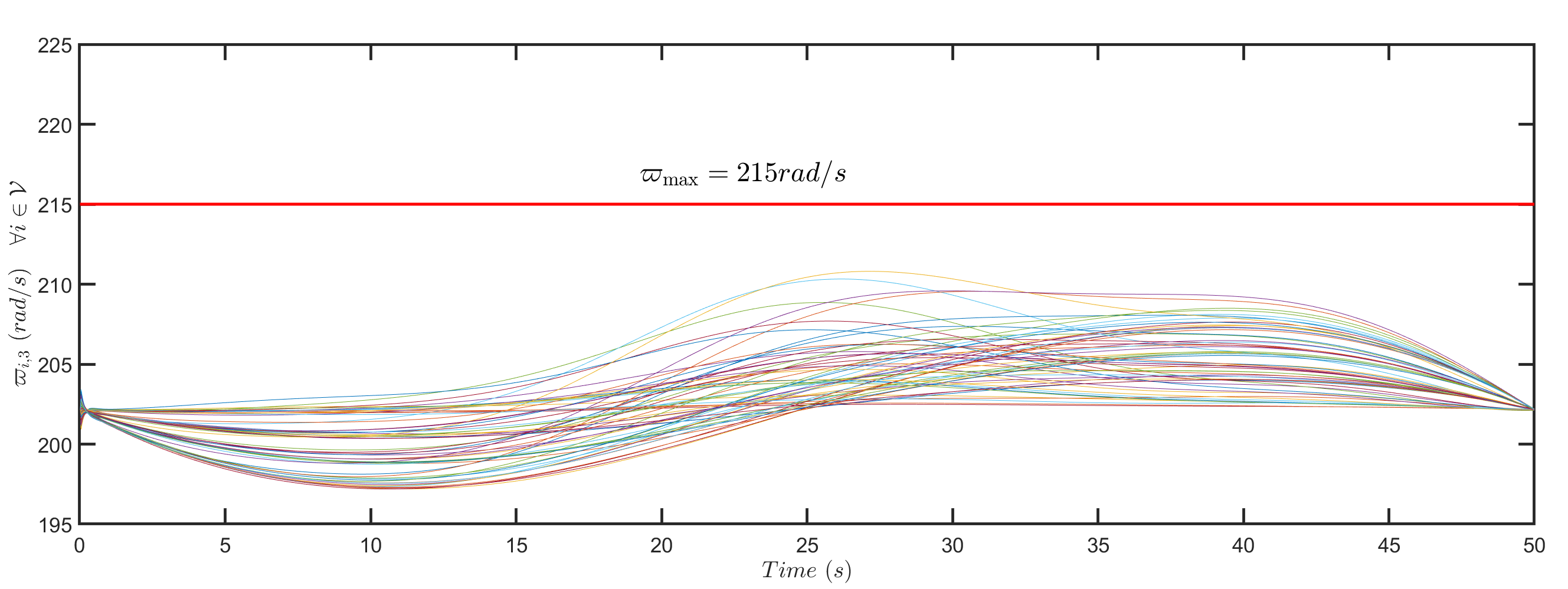}}
  \subfigure[$\varpi_{i,3}~\forall i\in \mathcal{V}$]{\includegraphics[width=0.9\linewidth]{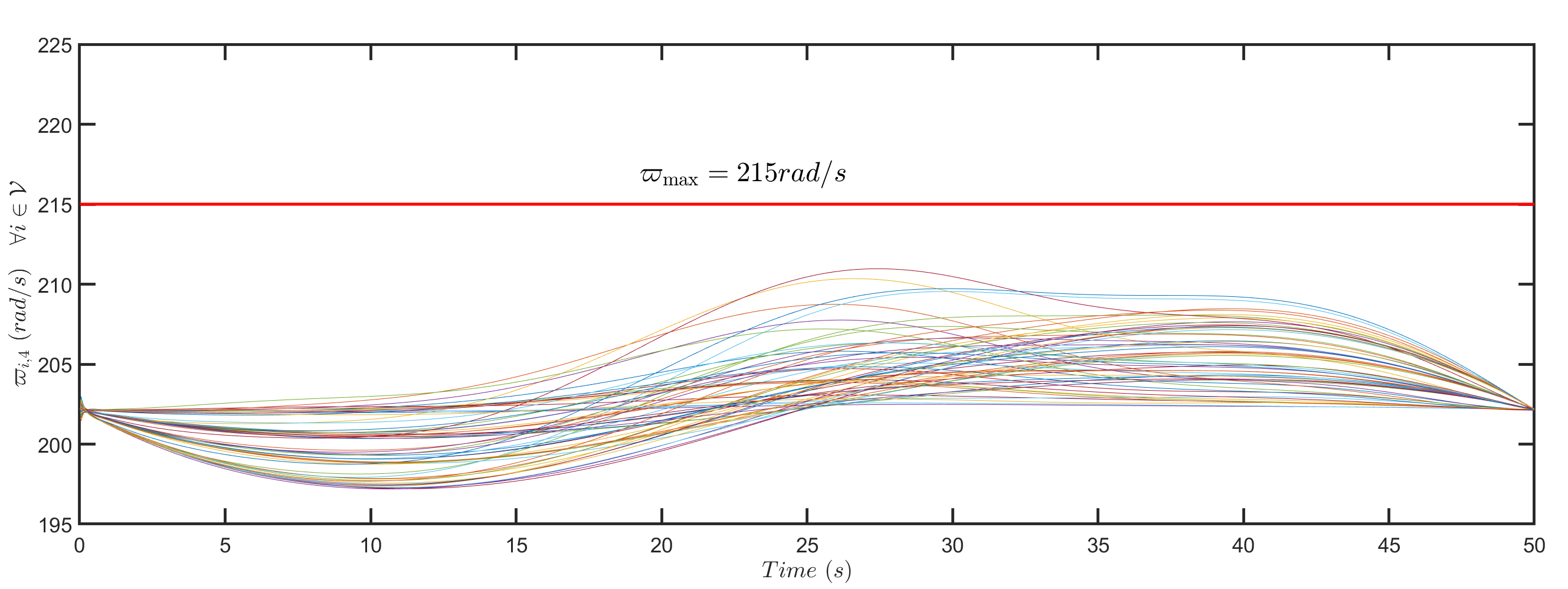}}
\vspace{-0.3cm}
     \caption{Angular speeds of rotors $1$ through four of all quadcopters. It is seen that the safety constraint \eqref{quadcopterpropeller} of every quadcopter $i\in \mathcal{V}$ is satisfied where $\varpi_{\mathrm{max}}=220~rad/s$.}
\label{AngularSpeeds}
\end{figure}
\begin{figure}
 \centering
 \subfigure[$x$ components of all quadcopters]{\includegraphics[width=0.9\linewidth]{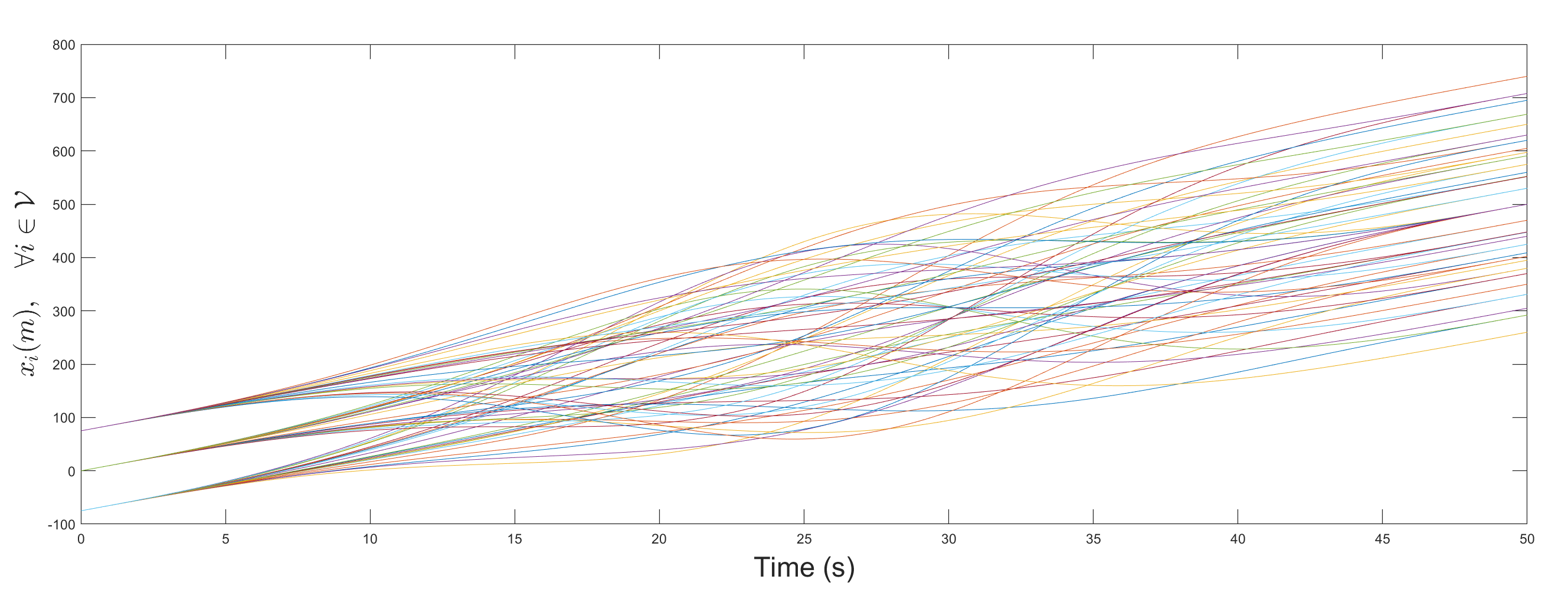}}
  \subfigure[$y$ components of all quadcopters]{\includegraphics[width=0.9\linewidth]{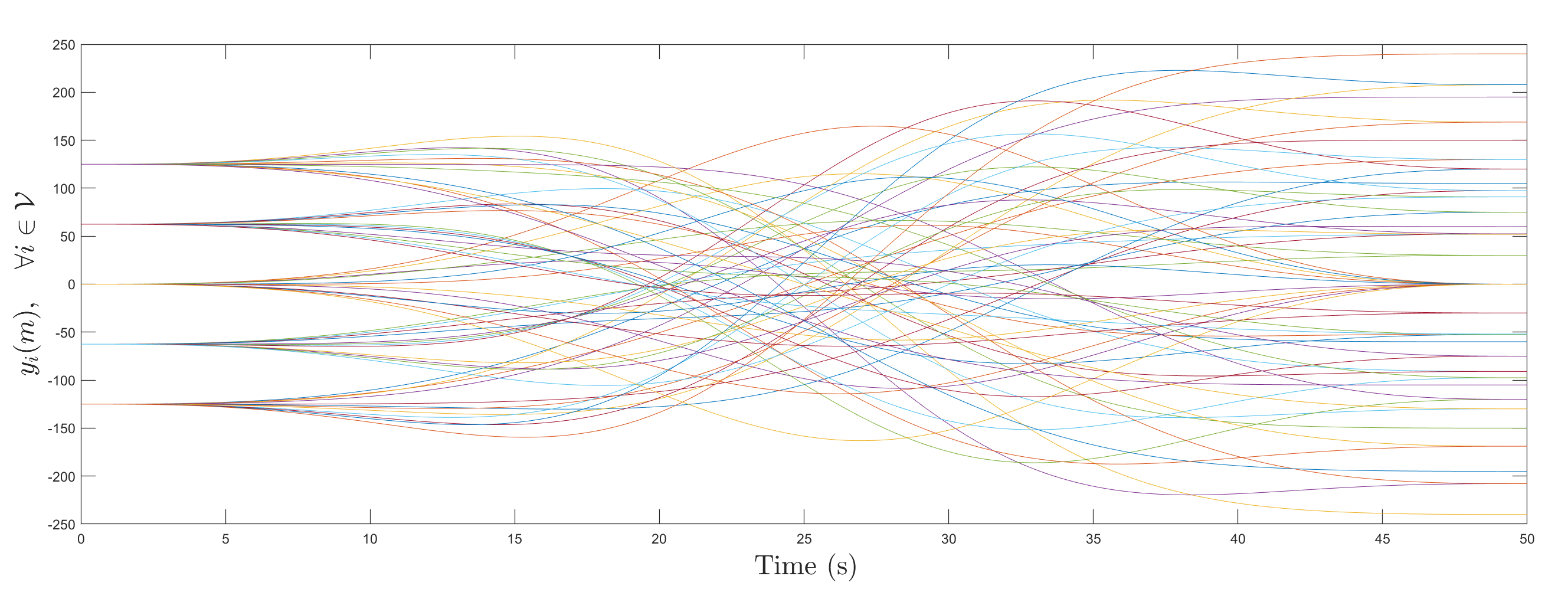}}
   \subfigure[$z$ components of all quadcopters]{\includegraphics[width=0.9\linewidth]{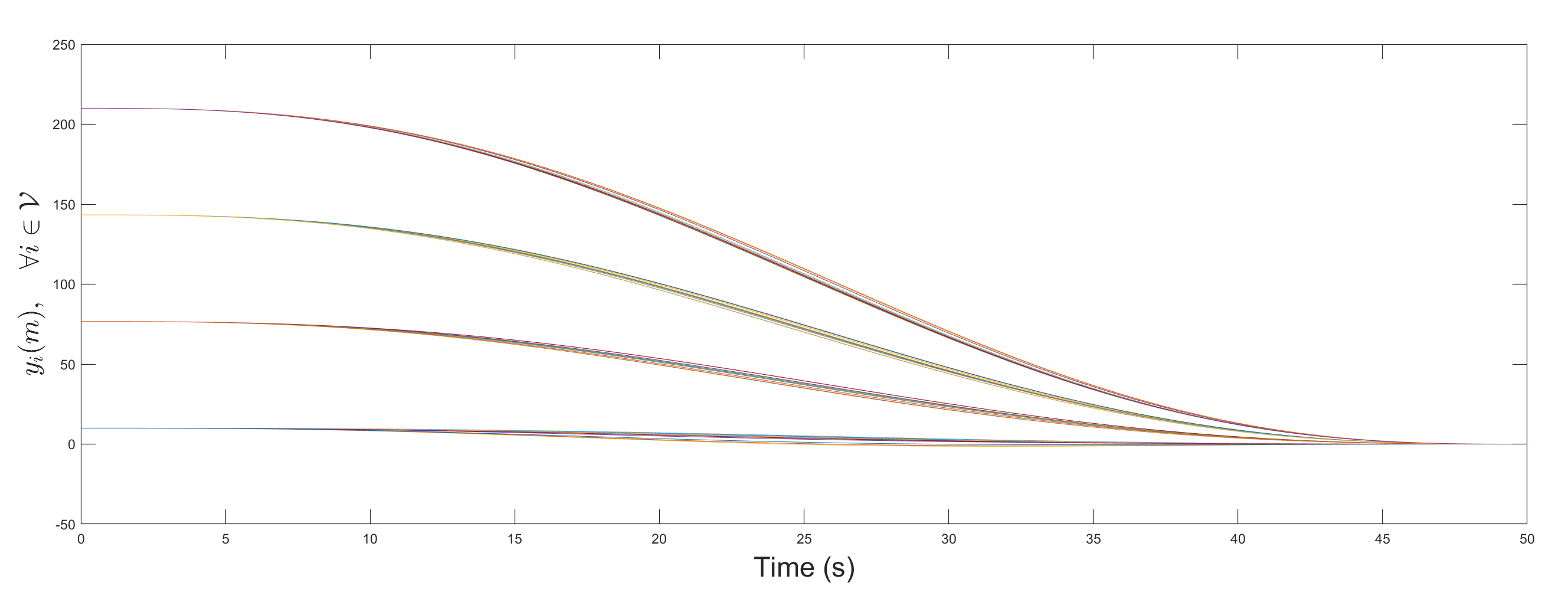}}
\vspace{-0.3cm}
     \caption{Position components of all quadcopters versus time for $t\in \left[0,50\right]s$.}
\label{XYZ}
\end{figure}
\section{Simulation Results}\label{Simulation Results}
We consider evolution of an MQS consisting of $60$ quadcopters where quadcopters have the same characteristics  and are all modeled by dynamics \eqref{NonlinearDynamics} with $\mathbf{x}_i$, $\mathbf{u}_i$, $\mathbf{f}\left(\mathbf{x}_i\right)$, and $\mathbf{g}\left(\mathbf{x}_i\right)$ given in \eqref{xufg}. We use the quadcopter parameters presented in Ref. \cite{gopalakrishnan2017quadcopter} and listed in Table  \ref{quadparameters} to simulate the real-time deployment coordination of the MQS from an initial formation shown in Fig. \ref{initial} to the final configuration shown in Fig. \ref{final}. Given the initial and final configurations of the MQS, $d_{\mathrm{min}}\beta^*=1.1889$. Therefore,   $\delta=0.19$ assign the upper-bound for the RTD tracking error. For simulation, we assume that  every quadcopter can be enclosed by a ball of radius $\epsilon=0.40$, $\varpi_{\mathrm{max}}=215~rad/s$ is the upper limit for the quadcopters' angular speeds.  

We further assume that the MQS moves with velocity $10\hat{\mathbf{e}}_1m/s$ before and after  RTD is activated, i.e $\dot{\mathbf{d}}(0)=\dot{\mathbf{d}}\left(t_f\right)=10 m/s$. Fig. \ref{AngularSpeeds} plots angular speeds of all quadcopter rotors. As it is seen safety condition \eqref{quadcopterpropeller} is satisfied for every quadcopter $i\in \mathcal{V}$. Fig. \ref{XYZ} plots $x$, $y$, and $z$ components of actual positions of all quadcopters versus time for $t\in \left[0,50\right]s$. 

\section{Conclusion}\label{Conclusion}
This paper developed a novel physics-based solution for the real-time deployment of multi-agent systems between arbitrary moving configurations. The proposed approach decomposes the RTD into rigid-boday rotation, $1$-D homogeneous transformation, and $2$-D heterogeneous motion.    Without loss of generality, we assumed that each agent is a quadcopter modeled by a $14$-th order nonlinear dynamics, and applied the feedback linearization control for each quadcopter  to stably and safely track the  desired RTD trajectory. By choosing a sufficiently-large RTD travel time, we assured that the safety constraints, including bounded  rotor speeds conditions and inter-agent collision avoidance, are assured.


\bibliographystyle{IEEEtran}
\bibliography{references}

\begin{IEEEbiography}[{\includegraphics[width=1in,height=1.25in,clip,keepaspectratio]{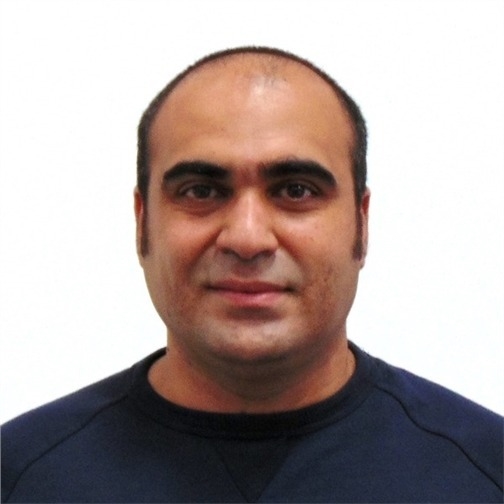}}]
{\textbf{Hossein Rastgoftar}} is an Assistant Professor in the Department of Aerospace and Mechanical Engineering at the University of Arizona and an Adjunct Assistant Professor at the Department of Aerospace Engineering at the University of Michigan Ann Arbor. He received the B.Sc. degree in mechanical engineering-thermo-fluids from Shiraz University, Shiraz, Iran, the M.S. degrees in mechanical systems and solid mechanics from Shiraz University and the University of Central Florida, Orlando, FL, USA, and the Ph.D. degree in mechanical engineering from Drexel University, Philadelphia, in 2015. 
\end{IEEEbiography}

\end{document}